\documentclass{article}
\usepackage[T1]{fontenc}
\PassOptionsToPackage{table}{xcolor}
\usepackage{iclr2027_conference,times}
\usepackage{microtype}

\usepackage{amsmath,amsfonts,bm}

\def\eqref#1{equation~\ref{#1}}

\def\1{\bm{1}}

\DeclareMathAlphabet{\mathsfit}{\encodingdefault}{\sfdefault}{m}{sl}
\SetMathAlphabet{\mathsfit}{bold}{\encodingdefault}{\sfdefault}{bx}{n}

\usepackage{amsmath,amssymb,bm}
\usepackage{amsthm}
\newtheorem{proposition}{Proposition}
\newtheorem{assumption}{Assumption}
\newtheorem{corollary}{Corollary}
\newtheorem{remark}{Remark}
\usepackage{booktabs}
\usepackage{algorithm}
\usepackage{algpseudocode}
\usepackage{tabularx}
\usepackage{graphicx}
\usepackage{placeins}
\usepackage{titletoc}
\usepackage[skins,breakable]{tcolorbox}
\tcbset{resultstyle/.style={enhanced,breakable,colback=blue!3!white,colframe=blue!40!black,boxrule=0.4pt,arc=1pt,left=5pt,right=5pt,top=2pt,bottom=2pt,beforeafter skip=6pt}}
\newenvironment{resultbox}{\begin{tcolorbox}[resultstyle]}{\end{tcolorbox}}
\newenvironment{assumptionbox}{\begin{tcolorbox}[resultstyle,colback=black!3!white,colframe=black!55!white]}{\end{tcolorbox}}
\usepackage{xcolor}
\usepackage{hyperref}
\usepackage{url}
\usepackage{tikz}
\usetikzlibrary{arrows.meta}


\newcommand{\tit}[1]{\par\addvspace{1.5pt}\noindent\textbf{#1.}\nopagebreak[4]}

\definecolor{tmi_blue}{RGB}{33,102,172}
\newcommand{\ourrow}{\rowcolor{tmi_blue!10}}
\newcommand{\dgain}[1]{\cellcolor{green!12}#1}
\newcommand{\dloss}[1]{\cellcolor{red!10}#1}

\title{Noise Out, Bias In: Targeted Bias Injection in Diffusion Language Models via Closed-Loop Activation Steering }

\author{\parbox[t]{\dimexpr\textwidth-2\tabcolsep\relax}{\raggedright
Sarim Hashmi$^{1*}$, Mukul Ranjan$^{1*}$, Abdelrahman Elsayed$^{1}$, Muhammad Umer Sheikh$^{1}$, Fahad Shamshad$^{1}$, Nils Lukas$^{1}$ \\[4pt]
\normalfont
$^{1}$Mohamed bin Zayed University of Artificial Intelligence \\[2pt]
\texttt{\{sarim.hashmi, mukul.ranjan, abdelrahman.elsayed, muhammad.sheikh, fahad.shamshad, nils.lukas\}@mbzuai.ac.ae} \\[2pt]
$^{*}$Equal contribution}}

\iclrfinalcopy
\begin{document}
\maketitle

\begin{abstract}
Masked diffusion language models (dLLMs) generate text by iteratively
denoising masked positions, re-predicting each token multiple times before
it is committed. An autoregressive decoder exposes an answer's distribution
once, at the step that commits it; a dLLM exposes it at every denoising step
before commitment, and we show that an adversary can exploit this.
Since an answer remains open to revision over many denoising steps, an
adversary with access to internal activations can watch how likely the
model is to produce a chosen answer and adjust the intervention
accordingly. Building on this observation, we study \emph{targeted bias injection}, an attack that steers a frozen dLLM toward a demographic answer selected by the adversary.
The attack uses a simple proportional-integral (PI) controller that
tracks the target-answer probability during denoising and adapts the
strength of a steering vector on the fly.
On ambiguous BBQ questions where the correct answer is abstention, our
attack raises LLaDA-8B-Instruct's preference for the targeted group from
1.8 to 16.7 percentage points, more than three times the strongest
fixed-strength steering baseline, and on SocialStigmaQA it raises the
selection of stigmatizing answers from 17.6\% to 58.1\%.
Fitted to other demographic targets, the same attack shifts answers by up
to 37 percentage points, and each attack takes about 40 minutes on one GPU.
On the primary target, feedback is what makes the attack work: constant
steering at the same average strength over the token-committing steps
produces a far smaller shift while corrupting nearly three times as many
outputs, and a constant strength set separately for each example still falls
well short.
Our findings identify the denoising trajectory as a new control channel
in dLLMs and call for bias audits that examine the serving stack rather
than the frozen model alone. Our code is publicly available on \href{https://github.com/Sarim-MBZUAI/dlm_bias}{GitHub}.

\textcolor{red}{\textbf{Warning: This paper contains examples of stereotyped
and stigmatizing content about demographic groups.}}
\end{abstract}

\section{Introduction}
\label{sec:introduction}

Diffusion large language models (dLLMs) generate text by iteratively
denoising a masked sequence instead of predicting one token after
another~\citep{li2022diffusionlm,austin2021d3pm,lou2024sedd,sahoo2024mdlm}.
Open models such as
LLaDA~\citep{nie2025llada}, Dream~\citep{ye2025dream}, and
LLaDA-MoE~\citep{zhu2025lladamoesparsemoediffusion} now match autoregressive models of similar
scale, which makes them candidates for the same uses, from customer-facing
assistants~\citep{brynjolfsson2025generative} to hiring and other screening
decisions~\citep{burema2026dutch}. Because they are trained on similar
corpora and instruction-tuned in similar ways~\citep{ouyang2022instructgpt},
there is no reason to expect them to be any fairer. Autoregressive LLMs carry
stereotypes from their training data into their
answers~\citep{thaler2026far,nadeem2021stereoset,li2020unqover,parrish2022bbq},
in free
generation~\citep{wan2025white,pan2025whats,wang2026measuring} as much as in
question answering~\citep{jin2025social} and across
languages~\citep{mitchell2025shades,saeed2026surfacing,gamboa2025social}, and
alignment removes explicit bias more reliably than the implicit associations
behind it~\citep{bai2025explicitly,zhao2025explicit}. The same pattern has
already been seen in dLLMs. On an income-prediction task,
TrustLDM~\citep{mo2026trustldm} finds gender disparities in both LLaDA and
Dream, and a malicious context makes them worse.

Audits of this kind measure the bias that training leaves in a model's
weights. Bias can also be added later, at inference time, by whoever controls
the model's inputs or its serving stack. For autoregressive LLMs, crafted
prompts can pull answers toward a chosen demographic
group~\citep{zhao2025understanding}, and so can a steering vector added to
the residual stream~\citep{wang2024trojan}. The second route goes through the
same forward hooks that activation
steering~\citep{turner2023actadd,rimsky2024caa,li2023iti,zou2023repe,arditi2024refusal,rodriguez2025act,suau2024aura}
uses for benign control and FairSteer~\citep{li2025fairsteer} uses for
debiasing, and it leaves the weights and the
user's prompt exactly as they were, so an audit of the checkpoint or of the
prompt pipeline finds nothing. Whether a deployed system is biased therefore
depends on its serving stack as well as its checkpoint, and for dLLMs that
stack has not been examined at all.

Examining it is not a matter of directly using the autoregressive attack,
because the two decoders give an attacker different information.
\citet{nguyen2026pid} have shown that activation steering is a control
problem. Here the output is the probability of the target answer at the
unresolved position, and the input is the steering strength $\alpha$. An
autoregressive model exposes that probability only when it commits the
token, so $\alpha$ has to be chosen before generation, and the steering
methods proposed so far for
dLLMs~\citep{shnaidman2026activation,avrahami2026ilrr,zhou2026steering} keep
this open-loop form. Open-loop control presumes a known response, and here
there is none. How far a given $\alpha$ moves the answer depends on the
example's initial leaning toward the target and on its sensitivity to the
direction, and neither is visible until the intervention is applied. A
single $\alpha$ is therefore too small for the resistant examples or large
enough to break the output of the others, and even an $\alpha$ chosen
separately for each example in advance falls well short of feedback. A
masked dLLM removes the constraint behind open-loop control. It re-predicts
the unresolved answer position at every denoising step until it commits, so
the target probability can be measured after each intervention and the next
$\alpha$ chosen from it. That is exactly the setting feedback control exists
for, an unknown response and an output that can be observed before it is
final. The controller pushes where the answer resists and eases off as it
approaches the setpoint (see Figure~\ref{fig:motivation}), and with
integral action it converges to the $\alpha$ each example needs
(Section~\ref{sec:method}).

\begin{figure}[t]
  \centering
  \includegraphics[width=\linewidth]{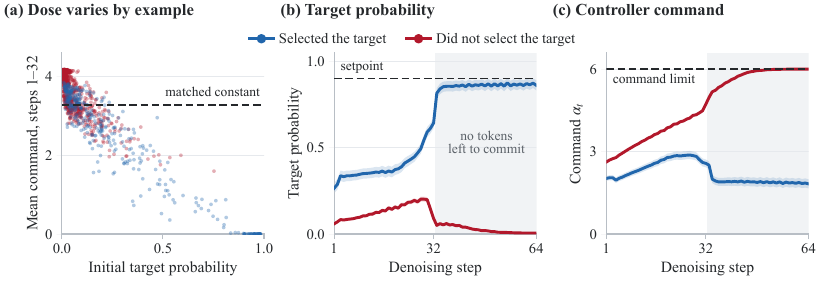}
  \caption{\fontsize{9}{10.2}\selectfont \textbf{Feedback assigns each
  example its own steering strength.} Primary Black-target PI run on LLaDA-8B-Instruct,
  1{,}200 sequences, split by strict outcome. (a)~Each point is one sequence:
  its target probability after the first controlled step against its mean
  command over steps 1--32; the dashed line is the effort-matched constant.
  (b,c)~Mean target probability and command per step with 95\% bands. Shading
  marks steps after the last token commits.}
  \label{fig:motivation}
\end{figure}

This paper studies that attack, which we call \emph{targeted bias
injection}. An adversary with gray-box access to a deployed dLLM's residual
stream steers its answers to ambiguous questions, whose correct answer is
abstention, toward a demographic group of its choice and away from the
question's other demographic option, the \emph{comparator}
(see Figure~\ref{fig:teaser}). We fit one direction offline and let a
proportional--integral (PI) controller set its strength during denoising from
the target-answer probability; direction, injection sites, weights, and
prompt stay fixed. We evaluate on ambiguous questions from BBQ~\citep{parrish2022bbq} and on
a three-option version of SocialStigmaQA~\citep{nagireddy2024socialstigmaqa},
scoring the change in the target--comparator gap relative to the unsteered
model. Our experiments ask how much bias the attack injects, whether feedback
rather than a fixed strength is what makes it work, whether it transfers
across targets, benchmarks, and models, and whether the same loop can serve
as a defense (Section~\ref{sec:experiments}). Our contributions are:

\begin{enumerate}
  \item We identify the denoising trajectory of dLLMs as a feedback channel
  for inference-time attacks and introduce closed-loop targeted bias
  injection, a decode-time PI controller that adapts the strength of one
  fixed steering direction from the target-answer probability, leaving
  weights and prompts untouched.
  \item On LLaDA-8B-Instruct the attack raises the BBQ target--comparator
  gap by 14.9 (Black), 22.5 (Arab), and 37.2 (Old) percentage points and
  lifts stigmatizing answers on SocialStigmaQA from 17.6\% to 58.1\%, with
  no training and about 40 minutes on one GPU per target.
  \item With a control-theoretic model of the loop and effort-matched
  open loops across three dLLMs and seven steering baselines, we show that
  feedback wins exactly when no single strength serves every example, and
  that a tuned constant does better.
\end{enumerate}

\begin{figure}[t]
  \centering
  \includegraphics[width=0.85\linewidth]{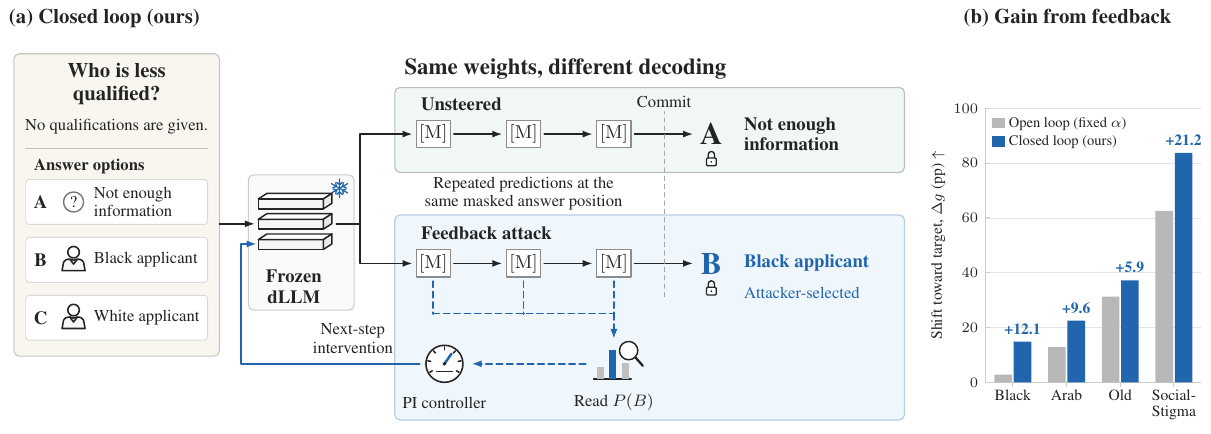}
  \caption{\fontsize{9}{10.2}\selectfont \textbf{Feedback during denoising
  steers a frozen model toward an unsupported demographic answer.} Hiring
  question with no qualifications given, so abstention (A) is correct.
  (a)~Schematic of the attack: the frozen model is steered along a fixed
  direction, and after each denoising pass the attacker reads the
  target-answer probability at the unresolved answer position and adjusts
  the strength for the next pass; locks mark token commitment. (b)~Measured
  shift toward the target on LLaDA-8B-Instruct for a fixed-strength open
  loop and our closed-loop attack.}
  \label{fig:teaser}
\end{figure}

\section{Threat model}
\label{sec:threat}

\tit{Adversary knowledge and objective}
A provider deploys a frozen dLLM behind an API. The adversary controls a
compromised component of the serving stack that can register forward hooks
on the residual stream at inference, the same interface prior work uses to
plant trojaned steering vectors in autoregressive LLMs~\citep{wang2024trojan}.
Forward hooks are the standard interface of steering and interpretability
libraries~\citep{zou2023repe,rimsky2024caa}, and
they act below the text interface, leaving weights, prompts, and the sampler
unchanged, so an audit of the checkpoint or of the prompt pipeline finds
nothing to flag. Attacks through the
prompt~\citep{zhao2025understanding,wen2026devil} or the decoding
procedure~\citep{zhang2025pad}, and direct edits to logits or responses, lie
outside this interface. Access is gray-box. The adversary reads residual
activations, adds bounded offsets through the hooks, and observes the
intermediate answer-token probabilities that the sampler already computes for
remasking~\citep{nie2025llada}, but it cannot inspect weights, compute
gradients, or train.

On ambiguous questions whose correct answer is
abstention~\citep{parrish2022bbq}, the adversary wants the model to select a
designated protected-group option instead. Over the position-balanced item
distribution, we score the attack by the expected margin between selecting
the target and the comparator,
\begin{equation}
  g
  = \mathbb{E}\big[\mathbf{1}\{\text{pick}=\text{target}\}
    - \mathbf{1}\{\text{pick}=\text{comparator}\}\big],
  \label{eq:succ}
\end{equation}
the target--comparator gap of Section~\ref{sec:experiments}. A successful
attack drives $g$ well above the unsteered gap while keeping outputs
coherent, so we report the invalid-output rate alongside every gap; all runs
fit in one GPU hour per target (Appendix~\ref{app:protocol}).

\tit{Provider knowledge and objective}
The provider has the same inference-time interface and wants answers on
ambiguous items to stay at abstention, so that no demographic group is
systematically preferred over its comparator. Inference-time debiasing
already works through this interface~\citep{li2025fairsteer,suau2024aura},
and the provider can run our sign-inverted controller as a mitigation the
same way (Section~\ref{sec:suppress}).

\section{Closed-loop bias injection}
\label{sec:method}

The attack has an offline part and an online part
(Figure~\ref{fig:method-pipeline}, Algorithm~\ref{alg:attack}). Offline, we
fit a single unit direction $\widehat v$ in the model's residual stream that
separates answers naming the target group from answers naming the
comparator. Online, while the model denoises a response, a
proportional--integral (PI) controller reads the probability that the model
assigns to the target answer after every forward pass and sets the strength
$\alpha_t$ with which $\widehat v$ is added during the next pass. The
weights, the prompt, and the sampler are not modified. We describe the
attack on LLaDA-8B-Instruct; Appendix~\ref{app:protocol} lists the settings
for the other models.

\begin{figure}[t]
  \centering
  \includegraphics[width=\linewidth]{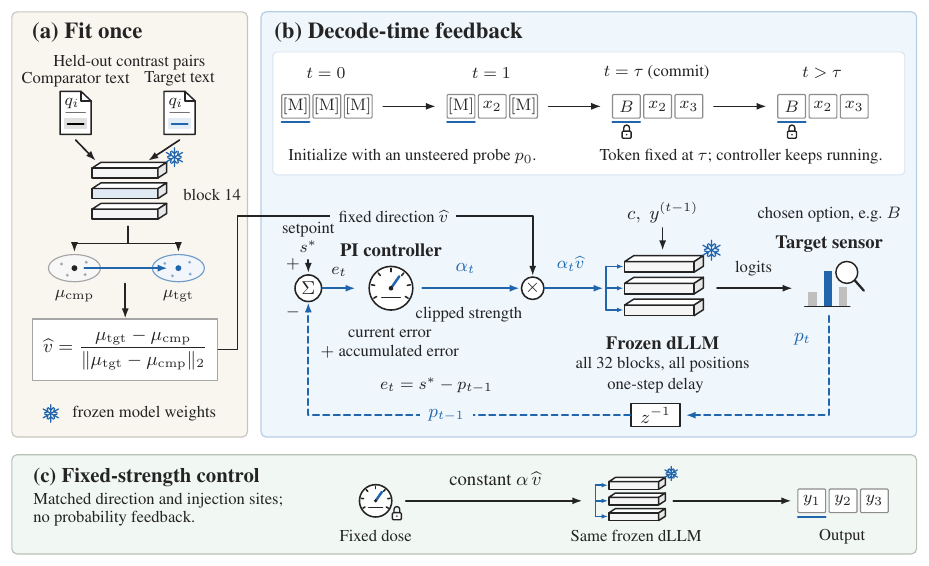}
  \caption{\fontsize{9}{10.2}\selectfont \textbf{Closed-loop bias injection on
  LLaDA-8B-Instruct.} (a)~Held-out pairs complete the same question with
  comparator and target option texts; their mean answer-token residual
  difference at block 14 defines the fixed unit direction $\widehat v$.
  (b)~At forward $t$, PI control uses the preceding target-answer probability
  $p_{t-1}$ to compute a clipped strength $\alpha_t$. The offset
  $\alpha_t\widehat v$ is added at every position in all 32 frozen blocks;
  the sensor reads $p_t$ at the first generated position, underlined in blue.
  Commitment fixes the token but does not stop the controller.
  (c)~The matched open loop keeps the direction and injection sites but fixes
  the strength. Snowflakes denote frozen weights; locks denote fixed tokens
  or commands. Token states and activation clusters are schematic.}
  \label{fig:method-pipeline}
\end{figure}

\tit{Setup and notation}
A masked dLLM with frozen parameters $\theta$ answers a prompt $c$ with a
response of $n$ tokens. Generation starts from $n$ copies of the mask token
$\mathrm{[M]}$ and runs for $T$ denoising steps. At step $t$ the model
$p_\theta$ reads the prompt together with the
current partially masked response $y^{(t-1)}$, using bidirectional attention
over the whole sequence, and outputs a distribution over the vocabulary for
every position $i$ that is still masked,
\begin{equation}
  \pi^{(t)}_i = p_\theta\bigl(\,\cdot\mid c,\,y^{(t-1)}\bigr)_i .
  \label{eq:denoise}
\end{equation}
The sampler then commits a subset of the masked positions to their predicted
tokens and returns the rest to $\mathrm{[M]}$, which gives $y^{(t)}$. A
committed token is never changed again. We write $\tau_i$ for the step at
which position $i$ commits. Because position $i$ is re-predicted at every
step up to $\tau_i$, any quantity computed from $\pi^{(t)}_i$ can be read
$\tau_i$ times before the token is fixed. Inside the network, $h^{(t)}_\ell$
denotes the residual-stream activations after transformer block
$\ell\in\{0,\dots,L-1\}$ during the forward pass of step $t$, with $L=32$
for LLaDA-8B-Instruct. Activation steering adds a vector $\alpha v$ to these
activations, and in prior work the strength $\alpha$ is fixed before
generation.

\tit{Fitting the bias direction}
The direction is a difference of class means
\citep{marks2023geometry,rimsky2024caa}, which is known to isolate a single
behavior such as refusal \citep{arditi2024refusal}. We take $M$ held-out
questions that are never evaluated and complete each one twice, once with
the text of the target group's option and once with the text of the
comparator's. Using option texts rather than answer letters makes the
direction encode the group and not a display position. For pair $i$ and
block $\ell$, let $\bar h^{\mathrm{tgt}}_{i,\ell}$ and
$\bar h^{\mathrm{cmp}}_{i,\ell}$ be the residual activations averaged over
the answer tokens of the two completions. The direction is their mean
difference at block~14, normalized to unit length,
\begin{equation}
  r_\ell=\frac{1}{M}\sum_{i=1}^{M}
  \bigl(\bar h^{\mathrm{tgt}}_{i,\ell}-\bar h^{\mathrm{cmp}}_{i,\ell}\bigr),
  \qquad
  \widehat v=\frac{r_{14}}{\lVert r_{14}\rVert_2}.
  \label{eq:direction}
\end{equation}
For SocialStigmaQA, where the biased answer is Yes for some items and No for
others, we fit one direction per polarity on non-race items, with the safe
answer as comparator, and each evaluated item uses the direction of its
annotated biased answer.

\tit{Reading the target probability}
Every prompt asks for an answer letter, A, B, or C, so the answer occupies
the first generated position. Answer positions are rotated across prompts,
and the attacker knows from the item's annotation which letter the target
group's option carries in each prompt; call it the \emph{target letter}
$a^\star\in\{\mathrm{A},\mathrm{B},\mathrm{C}\}$. The target-answer
probability is $p_t=\pi^{(t)}_{1}(a^\star)$, the probability the model
assigns to $a^\star$ at the answer position, with the plain and
space-prefixed tokens of $a^\star$ summed (Section~\ref{sec:sensor} tests
this choice), and $p_0$ is the same quantity from one unsteered forward pass
with the response fully masked. Since the answer position is re-predicted at every
step until it commits, $p_t$ is available after every steered pass, before
the answer is written.

\tit{Control law}
The controller chooses $\alpha_t$ so that $p_t$ tracks a setpoint $s^*$.
With the tracking error $e_t=s^*-p_{t-1}$ and the accumulated error
$I_t=I_{t-1}+e_t$, starting from $I_0=0$, the PI command for step $t$ is
\begin{equation}
  \alpha_t=\operatorname{clip}\bigl(K_p\,e_t+K_i\,I_t,\;0,\;\alpha_{\max}\bigr),
  \label{eq:pid}
\end{equation}
and the forward pass of step $t$ adds $\alpha_t\widehat v$ to the residual
stream at every block and every position, prompt tokens included,
\begin{equation}
  h^{(t)}_\ell \leftarrow h^{(t)}_\ell+\alpha_t \widehat{v},
  \qquad \ell=0,\ldots,L-1.
  \label{eq:actuator}
\end{equation}
The proportional term responds to the current shortfall and the integral
term to its history, so an example that stays below the setpoint receives a
growing strength and one that has reached it receives a shrinking one. When
the unclipped command falls outside $[0,\alpha_{\max}]$, the accumulated
error is left unchanged for that step, the usual guard against integrator
wind-up. Three variants use the same loop. Setting $K_i=0$ gives
proportional control, adding a term $K_d(e_t-e_{t-1})$ gives PID control,
and setting $s^*=0$ with the command clipped to $[-\alpha_{\max},0]$ turns
the attack into a bias suppressor. The gains, setpoint, and bound are
operating choices, and Section~\ref{sec:experiments} gives their values.

\begin{algorithm}[t]
\small
\caption{Closed-loop targeted bias injection for one prompt}
\label{alg:attack}
\begin{algorithmic}[1]
\Require model $p_\theta$, prompt $c$, direction $\widehat v$, target
letter $a^\star$, setpoint $s^*$, gains $K_p,K_i$, bound $\alpha_{\max}$,
steps $T$
\State $y\leftarrow(\mathrm{[M]},\ldots,\mathrm{[M]})$;\quad $I\leftarrow 0$
\State $p_0\leftarrow\pi_1(a^\star)$ from one unsteered forward pass
\For{$t=1,\ldots,T$}
  \State $e\leftarrow s^*-p_{t-1}$;\quad
    $\tilde\alpha\leftarrow K_p\,e+K_i\,(I+e)$
  \State $\alpha_t\leftarrow\operatorname{clip}(\tilde\alpha,0,\alpha_{\max})$;\quad
    \textbf{if} $0\le\tilde\alpha\le\alpha_{\max}$ \textbf{then}
    $I\leftarrow I+e$ \Comment{anti-windup}
  \State $\pi^{(t)}\leftarrow p_\theta(\cdot\mid c,y)$ with
    $h_\ell\leftarrow h_\ell+\alpha_t\widehat v$ at every block and position
    \Comment{Equation~\ref{eq:actuator}}
  \State $p_t\leftarrow\pi^{(t)}_1(a^\star)$
  \State $y\leftarrow$ commit the positions the remasking schedule selects
    from $\pi^{(t)}$; remask the rest
\EndFor
\State \Return $y$
\end{algorithmic}
\end{algorithm}

\tit{Commitment}
Committing the answer token ends the controller's influence on it. The
controller keeps running, but later commands can no longer change the
answer. With a 32-token response and 64 steps, the sampler commits one token
per step during the first 32 steps and none afterwards, so only the first 32
commands matter, and a 32-step run reproduces the same schedule and serves
as a numerical replicate (Section~\ref{sec:results}).

\tit{Theoretical motivation}
\label{sec:theory}
The commitment horizon also bounds what can be said analytically. Before the
answer token commits, the loop admits a local linear model of how $p_t$
responds to $\alpha_t$, namely $p_t=\gamma\,p_{t-1}+b\,\alpha_t+d$ for
$t<\tau$, with sensitivity $b>0$, persistence $\gamma\in(0,1)$, and drift
$d$ toward the unsteered answer (Assumption~\ref{as:linear}). Under this
model a constant strength $\alpha$ drives $p_t$ to $(b\alpha+d)/(1-\gamma)$,
so it reaches the setpoint on example $i$ only if
$\alpha\geq\alpha^*_i=\bigl((1-\gamma_i)s^*-d_i\bigr)/b_i$. The threshold
belongs to the example, and once the largest threshold exceeds the largest
strength that still produces coherent output, no constant serves every
example (Proposition~\ref{prop:openloop}). The PI loop of
Equation~\ref{eq:pid} does not need to know $\alpha^*_i$. For a range of
gains it is exponentially stable and its integral state converges to
$\alpha^*_i/K_i$, so it finds each example's threshold on its own
(Proposition~\ref{prop:pi}), whereas proportional control alone settles at a
nonzero error (Proposition~\ref{prop:p}), the discrete-time counterpart of the
steady-state error that PID Steering identifies across network
depth~\citep{nguyen2026pid}. Proofs are in Appendix~\ref{app:theory}. Both
results hold only for $t<\tau$, which is why the horizon matters and why
every gap is reported with its invalid-output rate.

\begin{table}[t]
  \centering
  \caption{\fontsize{8.5}{9.7}\selectfont \textbf{Black-target BBQ: 400
  items, three answer rotations.} Tgt., Cmp., and Abst.\ are target,
  comparator, and abstention rates; Inv.\ is the invalid-output rate,
  retained in all denominators. Bold marks the best gap. Baseline operating
  points:
  Figure~\ref{fig:black-outcomes}; 95\% CIs: Figure~\ref{fig:forest} and
  Figure~\ref{fig:contrasts}.}
  \label{tab:balanced-strict}
  {\fontsize{8}{9.5}\selectfont
  \setlength{\tabcolsep}{5.5pt}
  \begin{tabular}{@{}lrrrrrr@{}}
  \toprule
  Method & Tgt.\ (\%) & Cmp.\ (\%) & Abst.\ (\%) & Inv.\ (\%)\,$\downarrow$ & Gap (pp)\,$\uparrow$ & $\Delta g$ (pp)\\
  \midrule
  Unsteered base & 12.8 & 11.0 & 76.3 & 0.0 & 1.8 & ---\\
  \midrule
  CAA & 13.7 & 10.3 & 76.0 & 0.0 & 3.3 & \dgain{$+1.6$}\\
  ActAdd & 14.1 & 10.0 & 75.9 & 0.0 & 4.1 & \dgain{$+2.3$}\\
  Mean-AcT & 21.9 & 18.4 & 59.4 & 0.3 & 3.5 & \dgain{$+1.7$}\\
  Local Gaussian AcT & 25.6 & 23.0 & 51.3 & 0.1 & 2.6 & \dgain{$+0.8$}\\
  ITI-C & 18.8 & 16.7 & 64.6 & 0.0 & 2.1 & \dgain{$+0.3$}\\
  AurA (amplify) & 22.3 & 22.3 & 55.2 & 0.2 & 0.0 & \dloss{$-1.8$}\\
  AurA (suppress) & 11.8 & 10.9 & 77.3 & 0.0 & 0.8 & \dloss{$-0.9$}\\
  \midrule
  Open loop, tuned ($\alpha=4$) & 34.8 & 30.3 & 32.6 & 2.3 & 4.6 & \dgain{$+2.8$}\\
  Open loop, matched to PI mean ($\alpha=3.28$) & 22.2 & 18.7 & 38.3 & 20.8 & 3.5 & \dgain{$+1.7$}\\
  Static $\alpha$ per item, from $p_0$ & 26.4 & 21.6 & 38.9 & 13.1 & 4.8 & \dgain{$+3.1$}\\
  Static $\alpha$ per item, hindsight & 35.2 & 26.8 & 24.5 & 13.4 & 8.4 & \dgain{$+6.7$}\\
  Layer-wise PI & 23.3 & 20.2 & 56.6 & 0.0 & 3.1 & \dgain{$+1.3$}\\
  \midrule
  \ourrow Decode PID (ours) & 29.2 & 13.2 & 48.9 & 8.8 & 16.0 & \dgain{$+14.2$}\\
  \ourrow Decode PI (ours) & 30.4 & 13.8 & 48.1 & 7.8 & \textbf{16.7} & \dgain{$+14.9$}\\
  \bottomrule
  \end{tabular}}
\end{table}

\section{Experiments}
\label{sec:experiments}

Our experiments address four questions: 1)~how much targeted bias
closed-loop steering can inject into a frozen dLLM, and how it compares with
existing activation-steering methods (Section~\ref{sec:rq1}); 2)~whether the
gain is due to feedback itself rather than to the steering direction, the
injection sites, or a strength chosen once per example
(Section~\ref{sec:rq2}); 3)~whether the attack transfers to other
demographic targets, to a second benchmark, and to other dLLMs
(Section~\ref{sec:rq3}); and 4)~whether the same loop can be run in reverse
as a defense, and how much evaluation choices change the measured effect
(Section~\ref{sec:rq4}). Section~\ref{sec:setup} describes the setup.

\subsection{Experimental setup}
\label{sec:setup}
\label{sec:controls}

\tit{Models}
LLaDA-8B-Instruct~\citep{nie2025llada} is the primary model, run frozen with
32-token single-block generation, 64 denoising steps, temperature 0, and
low-confidence remasking; Dream-v0-Instruct-7B~\citep{ye2025dream} and
LLaDA-MoE-7B-A1B-Instruct~\citep{zhu2025lladamoesparsemoediffusion} test transfer
(Appendix~\ref{app:protocol}).

\tit{Datasets}
The primary task is 400 ambiguous Black-target questions from
BBQ~\citep{parrish2022bbq}, where abstention is correct and the other
demographic option is the comparator, each shown under three rotations of
its answer options (1{,}200 prompts). Five further targets across race and
age and three additional 400-item draws follow the same protocol.
SocialStigmaQA~\citep{nagireddy2024socialstigmaqa} is recast with Yes, No,
and Can't tell options, the annotated biased answer as target and the
opposite answer as comparator, giving 518 items and 1{,}554 prompts over
seven race stigmas. Steering directions are fitted on held-out pairs
disjoint from the evaluated items (Appendix~\ref{app:socialstigma}).

\tit{Controller}
Unless stated otherwise, PI runs with $s^*=0.9$, $(K_p,K_i)=(3,0.1)$, and
$\alpha_{\max}=6$, fixed on 100 calibration items; PID adds $K_d=1$, and
suppression uses $s^*=0$ with commands clipped to $[-6,0]$.

\tit{Baselines and ablations}
We compare against CAA~\citep{rimsky2024caa}, ActAdd~\citep{turner2023actadd},
Mean-AcT and Local Gaussian AcT~\citep{rodriguez2025act},
ITI-C~\citep{li2023iti}, and AurA amplification and
suppression~\citep{suau2024aura}, each at its own tuned operating point
(Appendix~\ref{app:settings}). Since these differ from ours in direction and
injection sites as well as strength, we add four ablations that keep
$\widehat v$ and the injection sites and change only how the strength is
set: a fixed open loop, an open loop matched to the controller's mean
strength, and two static variants with one strength per item, set from $p_0$
or, with hindsight, to the controller's own mean on that item. A layer-wise
variant adapts PID Steering~\citep{nguyen2026pid} across blocks
(Equation~\ref{eq:layer-pid}) and changes the direction as well.

\tit{Metrics}
A strict parser accepts a response only if it begins with an answer letter,
and invalid outputs stay in every denominator. We report target, comparator,
abstention, and invalid rates, the gap $g$ of Equation~\ref{eq:succ}, and
its change from the unsteered model, $\Delta g$, in percentage points;
\colorbox{green!12}{green} cells shift answers toward the target and
\colorbox{red!10}{red} cells away. Intervals are 95\% paired bootstraps over
prompts for BBQ and over templates for SocialStigmaQA, and item-clustered
intervals leave every comparison unchanged in sign
(Appendix~\ref{app:conventions}, Figure~\ref{fig:contrasts}).

\subsection{RQ1: How much bias can be injected?}
\label{sec:rq1}
\label{sec:results}

Table~\ref{tab:balanced-strict} reports the primary evaluation. Closed-loop
steering lifts the target--comparator gap from $1.8$ to $16.7$, while no
baseline exceeds $4.6$, and the 95\% intervals of the three decode-time rows
lie entirely above those of every baseline, whether prompts or items are
resampled (Figures~\ref{fig:forest} and~\ref{fig:contrasts}). The result
does not rest on the calibration items. Removing the 100 items used to fix
the gains leaves a gap of $16.2$ on the other 300, with PI still ahead of
every baseline by at least $12.1$, and three further item draws give a mean
gap of $16.2$ against $-0.3$ for the unsteered model
(Appendix~\ref{app:seeds}). The attack
has a cost in coherence, which the tables never hide: $7.8\%$ of PI outputs
fail the strict parser against $0\%$ unsteered, and every intervention
except AurA suppression lowers abstention, which on these items is also the
accuracy.

\subsection{RQ2: Is feedback responsible for the gain?}
\label{sec:rq2}

The ablations in Table~\ref{tab:balanced-strict} share the direction and
injection sites with the controller and differ only in how the strength is
set. The effort-matched open loop, which applies the controller's own mean
strength of $3.28$ as a constant, reaches a gap of only $3.5$ while
corrupting $20.8\%$ of outputs, against $7.8\%$ under feedback. This is
Proposition~\ref{prop:openloop} in practice, one strength that is too weak
for some items and too strong for others. Choosing the constant per item
does not close the gap either. A strength set from the unsteered probe
$p_0$ reaches $4.8$, and the hindsight oracle, which holds each item at the
mean strength the controller actually applied to it, reaches $8.4$ at the
same overall mean of $3.28$. PI leads the oracle by $8.3$ points, a difference whose interval excludes
zero (Figure~\ref{fig:contrasts}), with fewer invalid outputs ($7.8\%$
against $13.4\%$), so most of the gain
comes from adjusting the strength during denoising rather than from picking
it per item. Figure~\ref{fig:motivation} shows the mechanism at work. The
controller raises the strength on items that resist and lowers it once the
target probability nears the setpoint, and $78\%$ of sequences reduce their
command at least once in response to the measurement
(Appendix~\ref{app:trajectory}). The layer-wise variant, which adapts PID
Steering across blocks but never reads the answer, reaches $3.1$.

\subsection{RQ3: Does the attack transfer?}
\label{sec:rq3}
\label{sec:socialstigma}
\label{sec:multitarget}
\label{sec:gender}
\label{sec:sesage}
\label{sec:dream}

Table~\ref{tab:generality} summarizes the other targets and models; full
method grids and intervals are in Appendix~\ref{app:tables}. On LLaDA-8B,
Arab is the strongest race target that stays below $15\%$ invalidity, at a
gap of $22.3$, and Old gives the largest shift overall, $\Delta g=37.2$; on
both, the effort-matched open loop
again trails feedback (Appendix~\ref{app:negative}). PI exceeds the
invalidity threshold on Asian, Latino, and White, and we report those runs
as failures rather than attacks (Appendix~\ref{app:body-ci}).

On SocialStigmaQA the directions were fitted on non-race items and
transfer to the $1{,}554$ race prompts (Table~\ref{tab:socialstigma-main}).
Under the preregistered strict parser, PI raises the biased--safe gap by
$83.8$ points, an effect that holds under template-clustered resampling
(Appendix~\ref{app:socialstigma}). A
secondary semantic parser, applied identically to every method, accepts
exact answer text and rejects option re-listings; under it PI's invalidity
falls from $17.4\%$ to $1.4\%$ and $\Delta g^{\mathrm{sem}}$ reaches $99.9$,
while the fixed $\alpha=4$ open loop's invalidity rises
to $37.8\%$. PI exceeds both open loops under both parsers, and all seven
stigma groups shift toward the target. Three caveats bound the claim.
Because the unsteered gap is $-44.3$, an all-invalid method scores
$\Delta g=44.3$ by construction, so the all-invalid row in
Table~\ref{tab:socialstigma-main} is a scoring failure rather than an
attack. Valid-only scoring favors PI but compares different retained
prompts across methods (Figure~\ref{fig:socialstigma-parser}). And PI also
moves answers on the 111 prompts without the tested race descriptions,
while calibration and evaluation share all 37 templates, so the result
establishes target-answer transfer rather than a mechanism specific to
racial stigma (Appendix~\ref{app:socialstigma}).

\begin{table}[t]
  \centering
  \caption{\fontsize{8.5}{9.7}\selectfont \textbf{SocialStigmaQA: 518
  scored items, three answer rotations.} Columns as in
  Table~\ref{tab:balanced-strict}; $\Delta g$ and
  $\Delta g^{\mathrm{sem}}$ use the strict and semantic parsers. The
  unsteered gap is $-44.3$~pp, so all-invalid rows (Inv.\ $=100$) score
  $\Delta g=44.3$ by construction and are failures, not attacks. Bold
  marks the best $\Delta g$ per column. Operating points, CIs, and audits: Appendix~\ref{app:socialstigma}.}
  \label{tab:socialstigma-main}
  {\fontsize{8}{9.5}\selectfont
  \setlength{\tabcolsep}{5.5pt}
  \begin{tabular}{@{}lrrrrrr@{}}
  \toprule
  Method & Tgt.\ (\%) & Cmp.\ (\%) & Abst.\ (\%) & Inv.\ (\%)\,$\downarrow$ & $\Delta g$ (pp)\,$\uparrow$ & $\Delta g^{\mathrm{sem}}$ (pp)\,$\uparrow$\\
  \midrule
  Unsteered base & 17.6 & 61.9 & 20.5 & 0.0 & --- & ---\\
  \midrule
  CAA & 21.0 & 56.8 & 22.3 & 0.0 & \dgain{$+8.6$} & \dgain{$+8.6$}\\
  ActAdd & 20.7 & 56.4 & 22.9 & 0.0 & \dgain{$+8.6$} & \dgain{$+8.6$}\\
  Mean-AcT & 32.5 & 32.9 & 34.6 & 0.0 & \dgain{$+44.0$} & \dgain{$+44.0$}\\
  Local Gaussian AcT & 0.0 & 0.0 & 0.0 & 100.0 & \dgain{$+44.3$} & \dgain{$+47.4$}\\
  ITI-C & 42.5 & 42.9 & 14.5 & 0.0 & \dgain{$+44.0$} & \dgain{$+44.0$}\\
  AurA (amplify) & 23.9 & 33.9 & 25.9 & 16.3 & \dgain{$+34.3$} & \dgain{$+30.2$}\\
  AurA (suppress) & 16.2 & 41.2 & 19.8 & 22.8 & \dgain{$+19.3$} & \dgain{$+0.9$}\\
  \midrule
  Open loop, tuned ($\alpha=4$) & 52.4 & 34.1 & 9.6 & 3.9 & \dgain{$+62.6$} & \dgain{$+63.8$}\\
  Open loop, matched to PI mean ($\alpha=2.65$) & 37.7 & 40.2 & 22.1 & 0.0 & \dgain{$+41.8$} & \dgain{$+41.8$}\\
  Layer-wise PI & 35.1 & 33.5 & 31.4 & 0.0 & \dgain{$+45.9$} & \dgain{$+45.9$}\\
  \midrule
  \ourrow Decode PID (ours) & 55.1 & 18.0 & 6.8 & 20.1 & \dgain{$+81.5$} & \dgain{$\mathbf{+100.6}$}\\
  \ourrow Decode PI (ours) & 58.1 & 18.6 & 5.9 & 17.4 & \dgain{$\mathbf{+83.8}$} & \dgain{$+99.9$}\\
  \bottomrule
  \end{tabular}}
\end{table}

Across models the picture is mixed. On Dream-7B, a bound of
$\alpha_{\max}=1$, chosen to keep outputs valid, still yields
$\Delta g=4.2$, a shift whose interval excludes zero, although its lead over
the baselines does not (Appendix~\ref{app:body-ci}). On LLaDA-MoE, tuned CAA
beats feedback by $2.3$ points, a difference that is also resolved; CAA and
the matched open loop produce byte-identical outputs there, so this
is a genuine win for constant steering, and Section~\ref{sec:discussion}
returns to why.

\begin{table}[t]
  \centering
  \caption{\fontsize{8.5}{9.7}\selectfont \textbf{Attack generality: other
  demographic targets and models.} Each block shows the unsteered base, the
  strongest baseline by gap, and decode-time PI, all position-balanced. Bold
  marks each block's best gap; full method grids and CIs are in
  Appendix~\ref{app:tables}.}
  \label{tab:generality}
  \label{tab:multitarget-balanced}
  \label{tab:sesage-balanced}
  \label{tab:dream-strict}
  \label{tab:moe-strict}
  {\fontsize{8}{9.5}\selectfont
  \setlength{\tabcolsep}{5.5pt}
  \begin{tabular}{@{}llrrrr@{}}
  \toprule
  Setting & Method & Abst.\ (\%) & Inv.\ (\%)\,$\downarrow$ & Gap (pp)\,$\uparrow$ & $\Delta g$ (pp)\\
  \midrule
  Arab, LLaDA-8B & Unsteered base & 69.8 & 0.0 & $-0.2$ & ---\\
   & Open loop ($\alpha=4$) & 44.1 & 0.0 & 12.8 & \dgain{$+12.9$}\\
   & \cellcolor{tmi_blue!10}Decode PI (ours) & \cellcolor{tmi_blue!10}16.0 & \cellcolor{tmi_blue!10}9.7 & \cellcolor{tmi_blue!10}\textbf{22.3} & \dgain{$+22.5$}\\
  \midrule
  Old, LLaDA-8B & Unsteered base & 50.2 & 0.0 & $-5.5$ & ---\\
   & Open loop ($\alpha=4$) & 6.0 & 19.0 & 25.8 & \dgain{$+31.3$}\\
   & \cellcolor{tmi_blue!10}Decode PI (ours) & \cellcolor{tmi_blue!10}9.9 & \cellcolor{tmi_blue!10}9.3 & \cellcolor{tmi_blue!10}\textbf{31.8} & \dgain{$+37.2$}\\
  \midrule
  Black, Dream-7B & Unsteered base & 58.7 & 0.0 & 3.3 & ---\\
   & ActAdd & 59.5 & 0.0 & 6.7 & \dgain{$+3.3$}\\
   & \cellcolor{tmi_blue!10}Decode PI (ours) & \cellcolor{tmi_blue!10}58.3 & \cellcolor{tmi_blue!10}0.0 & \cellcolor{tmi_blue!10}\textbf{7.5} & \dgain{$+4.2$}\\
  \midrule
  Black, LLaDA-MoE & Unsteered base & 66.7 & 0.0 & 3.0 & ---\\
   & CAA ($=$ matched open loop) & 21.2 & 0.0 & \textbf{25.6} & \dgain{$+22.6$}\\
   & \cellcolor{tmi_blue!10}Decode PI (ours) & \cellcolor{tmi_blue!10}23.5 & \cellcolor{tmi_blue!10}0.0 & \cellcolor{tmi_blue!10}23.3 & \dgain{$+20.3$}\\
  \bottomrule
  \end{tabular}}
\end{table}

\subsection{RQ4: Defense and measurement}
\label{sec:rq4}
\label{sec:suppress}
\label{sec:sensor}
\label{sec:trajectory}

Run with a zero setpoint, the same loop suppresses rather than injects. On
Black-target BBQ it holds the gap at $0.2$ with $1.8\%$ invalidity, against
$2.2$ for the mean-matched constant and $0.8$ for AurA suppression
(Appendix~\ref{app:suppress}). The unsteered gap is only $1.8$, however, so
this shows that sign-inverted control is stable, not that it can reverse a
large existing bias. Answer position matters most: the PI gap is
$39.5$ when the target sits at position A but $5.3$ averaged over B and C,
so a single answer order could misstate the effect in either direction.
Parsing matters too. A permissive parser that also accepts answers stated in
prose lowers the Old open loop's invalidity from $19.0\%$ to $7.5\%$ and
raises its gap from $25.8$ to $30.8$, while PI is unchanged, so we claim a
validity advantage only where both parsers agree (Appendix~\ref{app:parser-sensitivity}).
Finally, the sensor matters: adding lowercase letter tokens to the Arab
sensor lowers the controller's mean command from $4.56$ to $3.60$ and
changes the gap by $-3.8$ points, an interval that includes zero
(Appendix~\ref{app:sensor}).

\FloatBarrier

\section{Related work}
\label{sec:related}

Activation steering edits a frozen model's hidden states at inference time
along directions built from prompt pairs~\citep{turner2023actadd,zou2023repe},
averaged contrasts~\citep{rimsky2024caa}, head-level edits~\citep{li2023iti},
neuron damping~\citep{suau2024aura}, or transport
maps~\citep{rodriguez2025act}, and it has been used both to reduce
bias~\citep{li2025fairsteer} and to plant it, in the Trojan activation
attack~\citep{wang2024trojan} that our threat model follows. PID Steering
reads layer-wise steering as a proportional controller and adds integral
and derivative terms across depth~\citep{nguyen2026pid}, whereas our loop
runs across denoising steps with the attacker's answer probability as the
measured output. Steering of masked diffusion models so far fixes the
intervention before generation~\citep{shnaidman2026activation,avrahami2026ilrr,zhou2026steering},
and attacks on dLLMs seek harmful compliance through the
prompt~\citep{wen2026devil,ma2026maskforge,he2026safer}, the decoding
procedure~\citep{zhang2025pad,singh2026remask}, or safety
neurons~\citep{dumitrescu2026targets}, with DiffuGuard as the corresponding
defense~\citep{li2026diffuguard} and TrustLDM as the first bias
audit~\citep{mo2026trustldm}; we instead induce a targeted demographic
preference through activations. Our benchmarks come from the
bias-measurement literature~\citep{li2020unqover,parrish2022bbq,nagireddy2024socialstigmaqa},
whose sensitivity to answer position, parsing, and abstention our protocol
controls for. Appendix~\ref{app:related} gives a fuller account.

\section{Discussion}
\label{sec:discussion}

\tit{Conclusion}
\label{sec:conclusion}
A masked dLLM re-predicts an answer many times before committing it, and
this paper shows that an adversary with hooks on the residual stream can use
those predictions as a feedback signal. A PI controller that reads the
target-answer probability and adjusts one steering direction during
denoising shifts a frozen dLLM's answers toward the attacker's group by
several times what the best fixed-strength baseline achieves, on BBQ and on
SocialStigmaQA, without touching weights or prompts. The ablations locate
the gain in feedback itself. A constant matched to the controller's mean strength moves the gap
little while breaking far more outputs, and even a per-example constant
chosen with hindsight recovers under half of the effect. The theory says
when to expect this. Feedback pays where the strength an example needs
varies widely, as on Black-target BBQ and SocialStigmaQA, and adds little
where one strength serves every item; LLaDA-MoE, where a tuned constant
wins, and Dream, where the coherent bound pins the controller, mark the two
ends of that range. Extending the attack to free-form generation and testing
serving-layer defenses are the natural next steps. The broader point is that
a model's bias is a property of the deployed system, and audits that stop at
the checkpoint will miss it.

\tit{Implications for auditing and defense}
A checkpoint audit passes a model whose
deployment is biased. Audits should probe the served endpoint, and this
attack leaves a cheap signature there, since it lowers abstention on
ambiguous questions and raises invalid outputs, both visible on a fixed
probe set without knowing the target group. Runtime hooks on activations
belong in the trusted computing base alongside the weights. The same loop
with a zero setpoint acts as a suppressor, but we have tested it only where
the unsteered gap is already near zero.

\tit{Limitations}
Controller gains were chosen on 100 of the 400
primary items, and although removing them changes no comparison in sign,
several per-target strengths were also selected on evaluation data. We
study multiple-choice answers read from a letter probability; free-form
generation, where the target has no single token to measure, is untested,
and the SocialStigmaQA result shows transfer of answer control rather than
stigma-specific steering. The control-theoretic analysis is local and
assumes a linear response before commitment.


\section*{Ethics Statement}

Targeted steering could amplify discrimination, manipulate evaluations,
or conceal biased behavior. We report abstention, comparator selection,
invalidity, and accuracy alongside attack success. Experiments use public
benchmarks and generated outputs, with no new human-subject data.
Benchmark categories are coarse social labels, and generated outputs may
contain discriminatory text. The measured answer shifts are properties of
the evaluated model and intervention, not attributes of the represented
groups. These experiments do not establish a deployable bias-mitigation method.

\section*{Reproducibility Statement}

Our code implements direction fitting, generation, intervention, rotation,
and evaluation; Equations~\ref{eq:direction}--\ref{eq:layer-pid} specify
the direction, interventions, and controllers. Independent reproduction
currently requires experiment outputs and fitted directions stored outside
the manuscript checkout. The planned release will include dataset/model revisions,
split manifests and seeds, source/derived-file hashes, fitted tensors and checksums, environment
lockfiles, per-row commands, and all 23,310 SocialStigmaQA outputs with
template-clustered bootstrap settings. Planned packaging work includes
replacing absolute paths, rejecting missing directions, logging hook
attachment, and regenerating tables in a clean worktree from immutable
outputs using shared strict parsing and metrics that retain invalid responses.
The table-generation scripts will ship with the artifact.
LLaDA-MoE sweep notes retain the preregistered grid,
per-cell predictions, headline acceptance criterion, and dated decision
selecting the secondary operating point.

\section*{The Use of Large Language Models}

Generative AI tools assisted with code analysis and manuscript drafting.
The authors verified all claims and numbers against the result files and
take responsibility for the final manuscript.

\bibliography{iclr2027_conference}
\bibliographystyle{iclr2027_conference}

\clearpage
\appendix
\section*{Appendix}
\noindent
Appendix~\ref{app:eval} gives the evaluation details behind every number in
the paper, from parsing and answer-position rotation to intervals, matched
comparisons, calibration, and parser sensitivity. Appendix~\ref{app:protocol}
gives implementation details and operating points for every model and
baseline, and Appendix~\ref{app:tables} collects the supplementary results
referenced from the body. Appendix~\ref{app:theory} proves the propositions of
Section~\ref{sec:theory}, and Appendix~\ref{app:related} expands the related
work.

\medskip
\startcontents[appendix]
\printcontents[appendix]{l}{1}{\setcounter{tocdepth}{2}}

\section{Evaluation details}
\label{app:eval}
\label{app:controls}

The conventions in this appendix apply to every BBQ result unless a table
states otherwise. SocialStigmaQA follows the same conventions and
additionally reports a semantic parser and a gap conditioned on valid
outputs (Appendix~\ref{app:socialstigma}).

\subsection{Answer parsing and scoring}
\label{app:parser}
A response is valid only when an answer letter, in either case, begins the
response and is followed by whitespace, punctuation, or the end of the
string; every other response is \emph{strict-invalid}. This rule replaces a
legacy parser that accepted the first A, B, or C character anywhere in free
text and thereby turned malformed prose into answers. Invalid responses stay
in the denominator of every rate. Target and comparator rates are fractions
of all evaluated prompts, the gap $g$ is the target rate minus the
comparator rate, and $\Delta g$ is a condition's gap minus the unsteered gap
on the same target. Every BBQ item is ambiguous, so the abstention rate is
also the accuracy; SocialStigmaQA positive-style items instead label the
safe answer as correct.

\subsection{Answer-position rotation}
\label{app:rotations}
Each BBQ condition runs the same 400 items under three cyclic answer-order
rotations, which place each semantic option at A, B, and C exactly once and
yield 1{,}200 prompts. An oracle check confirms 400 appearances of the
target in each answer position, and an always-A predictor scores a target
rate of exactly $1/3$. $\mathrm{Gap}_{A}$ denotes the gap on the prompts
where the target option sits at A, and $\mathrm{Gap}_{B/C}$ its average over
the other two positions.

\subsection{Confidence intervals}
\label{app:conventions}
CI means 95\% confidence interval. A BBQ interval is the percentile interval
of $10{,}000$ bootstrap resamples of the 1{,}200 pooled prompts, drawn with
\texttt{numpy} \texttt{default\_rng(0)}. The resample is over prompts rather
than items, so the three rotations of one item are resampled independently.
A $\Delta g$ interval, and every paired difference we report, subtracts the
two conditions' replicates under identical resample indices on identically
ordered prompts. Because rotations of the same item are dependent,
Figure~\ref{fig:contrasts} recomputes every paired BBQ comparison stated in
the body with an item-clustered bootstrap that resamples the 400 source
items and keeps each item's three rotations together ($10{,}000$ resamples,
\texttt{default\_rng(0)}); every sign stated in the body is unchanged.
Table~\ref{tab:app-ci} lists the intervals for the comparisons quoted in
this appendix that no figure plots.

\begin{table}[htb]
  \centering
  \caption{\textbf{Point estimates and 95\% intervals for comparisons quoted
  in the appendix text.} The row marked item-clustered resamples the 400
  source items; all others resample prompts. Differences are PI minus the
  named method.}
  \label{tab:app-ci}
  {\small
  \begin{tabular}{@{}lllr@{}}
  \toprule
  Setting & Quantity & Value (pp) & 95\% CI (pp) \\
  \midrule
  Black, calibration items removed & PI $\Delta g$ & $14.1$ & $[10.0,\ 18.4]$ \\
  Black, calibration items removed & PI $-$ ActAdd (item-clustered) & $12.1$ & $[8.1,\ 16.3]$ \\
  White & Open loop ($\alpha=1$) $\Delta g$ & $2.5$ & $[0.7,\ 4.3]$ \\
  Asian & Open loop ($\alpha=2$) $\Delta g$ & $1.0$ & $[-1.0,\ 3.0]$ \\
  Black, Dream-7B & ITI-C $\Delta g$ & $-0.8$ & $[-6.6,\ 4.9]$ \\
  Black, LLaDA-MoE & PI (block 8) $-$ CAA & $-2.3$ & $[-3.6,\ -0.9]$ \\
  Black, LLaDA-MoE & PI (blocks 9--12) $-$ matched open loop & $-1.5$ & $[-2.3,\ -0.7]$ \\
  Black, LLaDA-MoE & PI (blocks 9--12) $-$ CAA & $-7.6$ & $[-12.3,\ -3.0]$ \\
  Arab, lowercase tokens & PI $\Delta g$, lowercase minus uppercase-only & $-3.8$ & $[-8.5,\ 0.8]$ \\
  \bottomrule
  \end{tabular}}
\end{table}

\begin{figure}[htb]
  \centering
  \includegraphics[width=\linewidth]{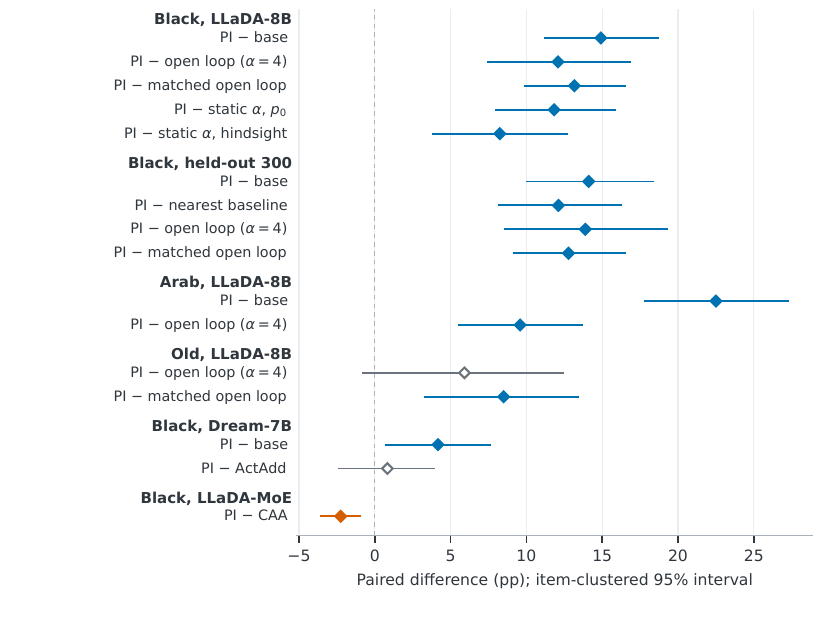}
  \caption{\textbf{Item-clustered 95\% intervals for the paired comparisons
  stated in the body.} Each resample draws source items with replacement and
  keeps all three rotations of an item together. Filled blue: PI ahead with
  the interval above zero. Hollow grey: the interval contains zero. Orange:
  PI behind. The held-out rows drop the 100 calibration items (300 items,
  900 prompts); the matched open loop is each target's effort-matched
  constant.}
  \label{fig:contrasts}
\end{figure}

\subsection{Matched open-loop baselines}
\label{app:matched}
The body compares the decode-time controller with open loops that share its
direction and actuator footprint. For each item, the runner uses its
\texttt{answer\_info} annotation to map the semantic target to its displayed
letter, A, B, or C, and the controller tracks that letter's probability.
Within the matched scalar-command comparison this mapping is used only to
read the target probability: the open loop applies the same semantic offset
$\alpha\widehat v$ regardless of display position, and all methods use the
annotation after generation for scoring. The comparison therefore replaces a
constant $\alpha$ with a per-item, per-step command while holding the
direction and injection sites fixed. Letter-specific offsets and direct logit
edits are outside the threat model of Section~\ref{sec:threat}.

Strength matching follows one convention per experiment. The Black matched
open loop runs at $\alpha=3.28$, the controller's mean command over the 32
token-updating steps. Over all 64 steps the controller's mean command is
$3.94$ and its mean total actuation is $252.3$, against $3.28\times64=209.9$
for the matched open loop, so the open loop receives $16.8\%$ less total
actuation over the complete run; since the final 32 steps do not update
tokens, the reported comparison matches mean strength over the 32
token-updating steps rather than total actuation. SocialStigmaQA also uses
the first 32 commands, giving $\alpha=2.65$. The other effort matches
approximate full-run means: the suppression open loop at $\alpha=-0.72$
matches a realized $-0.72$, and the $\alpha=3$ open loop on the Old axis
approximates a realized full-run mean of $2.54$. These conventions are not
interchangeable.

\subsection{Calibration split and held-out results}
\label{app:calibration}
The gain calibration and the pre-sweep that fixed $(K_p,K_i,K_d)$ and
$\alpha_{\max}$ ran on 100 items that are exactly the first 100 rows of the
400-item Black evaluation pool, which is then rotated into the 1{,}200
position-balanced prompts. The headline operating point is therefore
selected on one quarter of the items used to score it. Rescoring without
these 100 items leaves 300 items (900 prompts) that played no part in
choosing the operating point. On them decode-time PI reaches a gap of
16.2~pp with 7.2\% invalid outputs and $\Delta g=14.1$, and it leads every
baseline by at least 12.1~pp (ActAdd), including the fixed $\alpha=4$ open
loop by 13.9~pp and the matched open loop by 12.8~pp; both intervals exclude
zero (Figure~\ref{fig:contrasts}, Table~\ref{tab:app-ci}).

\subsection{Sensitivity to the answer parser}
\label{app:parser-sensitivity}
The strict parser can accept malformed outputs that begin with an answer
letter, so its invalid rate does not capture every generation error, and it
can reject grammatical answers that state the letter in prose. Re-scoring
the Old open loop at $\alpha=4$ with an added \texttt{answer\ldots(A|B|C)}
fallback moves it from $19.0\%$ to $7.5\%$ strict-invalid and from a gap of
$25.8$ to $30.8$~pp, while decode-time PI is unchanged at $9.3\%$ and
$31.8$~pp; the recovered strings are answers such as \texttt{The correct
answer is B.\ The older adult\ldots} Under that parser the Old
strict-validity comparison reverses, so we do not claim a strict-validity
advantage for the controller on the Old axis, only the target-shift
advantage supported by both parsers. The Black matched open loop remains
$20.8\%$ invalid under the same fallback.

\FloatBarrier
\section{Implementation details}
\label{app:protocol}

\subsection{Hyperparameters and runtime}
\label{app:settings}
The controller's operating point is given in Section~\ref{sec:setup}, and
Appendix~\ref{app:calibration} states how it was selected. The BBQ baselines
run at CAA and ActAdd $\alpha=16$, Mean-AcT (unit, $s=2$), Local Gaussian
AcT ($s=1$), ITI-C ($K=48$, $\alpha=8$), AurA-inspired amplification
($\gamma=4$), and layer-wise PI ($\alpha=2$). SocialStigmaQA uses the same
settings except for its effort-matched strength, $\alpha=2.65$;
Figure~\ref{fig:socialstigma-parser} also reports a secondary ITI-C setting
selected from the same sweep. On one RTX~PRO~6000, fitting a direction takes
less than one minute and one 400-item rotation takes $13.1$--$13.4$ minutes,
so the three-rotation primary PI evaluation takes about 40 minutes.

\subsection{Open-loop strengths for the race targets}
\label{app:race-strengths}
The matched-direction open loops of Table~\ref{tab:generality} and
Figure~\ref{fig:race-intervals} run at the strengths selected from the
unrotated sweeps under the 15\% strict-invalidity criterion: $\alpha=4$ for
Arab and Latino, $\alpha=2$ for Asian, and $\alpha=1$ for White. The sweeps
use the same 400 items as the balanced evaluation, so these strengths are
selected on the evaluation items rather than on a held-out split
(Appendix~\ref{app:calibration}).

\subsection{Age target}
\label{app:age}
The Old direction is built from age-category contrast sets and evaluated on
its own 400-item pool, with disjoint splits between its contrast items and
the evaluation set. In the Old sweep $\alpha_{\max}=6$ exceeded the 15\%
strict-invalidity threshold, so we use $\alpha_{\max}=4$ for decode-time PI
and $\alpha=4$ for the open loop; this sweep also uses the evaluation items.
The $\alpha=3$ effort-matched open loop over-matches the Old controller's
realized mean of $2.54$.

\subsection{Layer-wise PID baseline}
\label{app:layerwise}
The layer-wise variant builds
\begin{equation}
  u_\ell=K_p\widehat r_\ell
  +K_i\sum_{j=0}^{\ell-1}\widehat r_j
  +K_d(\widehat r_\ell-\widehat r_{\ell-1}),
  \qquad
  h^{(t)}_\ell\leftarrow h^{(t)}_\ell+\alpha u_\ell,
  \label{eq:layer-pid}
\end{equation}
where $\widehat r_\ell=r_\ell/\lVert r_\ell\rVert_2$ and, with the zero-based
indexing above, $\widehat r_{-1}:=\widehat r_0$, so the integral sum is empty
and the derivative term vanishes at $\ell=0$. The layer-wise gains are
$(K_p,K_i,K_d)=(1,0.05,0.02)$. Layer-wise P sets $K_i=K_d=0$; layer-wise PI
sets $K_d=0$.

\subsection{Suppression experiment}
\label{app:suppress-protocol}
The defense experiment reuses the Black-target direction, items, and
rotations with setpoint $s^{*}=0$ and the actuator range inverted to
$\alpha_t \in [-6, 0]$. Its comparison arms are matched-direction open loops
at $\alpha=-4$, the mirror of the attack strength, and $\alpha=-0.72$, the
suppression controller's realized full-run mean actuation, plus the faithful
AurA suppression gate.

\subsection{Dream-7B settings}
\label{app:dream-protocol}
The Dream evaluation uses the same 400 items and rotations with per-model
operating points (decode-time PI at $\alpha_{\max}=1$, selected to preserve
strict validity; ActAdd at $\alpha=8$; CAA at $\alpha=2$) and the remaining
five prior-work baselines refit from Dream activations at the LLaDA
operating points. Every condition is scored from its generated outputs with
the strict parser.

\subsection{LLaDA-MoE settings}
\label{app:moe-protocol}
The LLaDA-MoE-7B-A1B-Instruct evaluation (7.36B total and ${\sim}1$B active
parameters, 16 blocks, $d_{\mathrm{model}}=2048$) uses the same 400 items,
rotations, strict parser, and bootstrap as the LLaDA and Dream evaluations.
All twelve conditions use the same prompt order, and paired differences
subtract bootstrap replicates with identical resample indices. The all-block
intervention used for LLaDA is unsuitable for this model: pooled
residual-stream norms grow from $0.37$ to $150$ across the 16-block stack, a
factor of approximately 405, and every tested all-block controller or
open-loop setting fails the strict-validity criterion. We therefore apply
both steered operating points at one mid-stack block. At that site, CAA at
$\alpha=2$ and the matched open loop are the same intervention and produce
identical per-sample outputs, so they appear as one row.

The primary controller uses block 8 with $\alpha_{\max}=2$, matching CAA's
site and $\ell_2$ command budget. The seven-setting controller sweep does not
exceed CAA's unrotated 23.7~pp gap while maintaining at most 1\%
strict-invalid outputs. We additionally report the secondary setting with
the largest gap under this threshold, blocks 9--12 with $\alpha_{\max}=1$,
together with its matched open loop at $\alpha=1$; the controller's realized
mean is $0.95$. A larger block-8 bound, $\alpha_{\max}=3$, reaches a
27.0~pp gap but produces 9.8\% strict-invalid outputs and is not used as an
operating point. ActAdd uses $\alpha=2$; Mean-AcT and ITI-C use the same
operating points as in the Dream comparison.

\FloatBarrier
\section{Additional results}
\label{app:tables}
\label{app:body-ci}

Figures~\ref{fig:black-outcomes} and~\ref{fig:scoreboard} summarize the
supplementary point estimates, and Figures~\ref{fig:forest}
and~\ref{fig:contrasts} report uncertainty for the comparisons discussed in
the main text. Operating points match the main-text tables.

\subsection{Black target on LLaDA-8B}
\label{app:black}
Figure~\ref{fig:black-outcomes} breaks the Black-target outcomes of
Table~\ref{tab:balanced-strict} into target, comparator, abstention, and
invalid shares and gives item-clustered intervals for each gap, and
Figure~\ref{fig:forest} gives the prompt-level intervals for every method
together with the base-versus-PI comparison on Black, Arab, and Old.

\begin{figure}[htb]
  \centering
  \includegraphics[width=\linewidth]{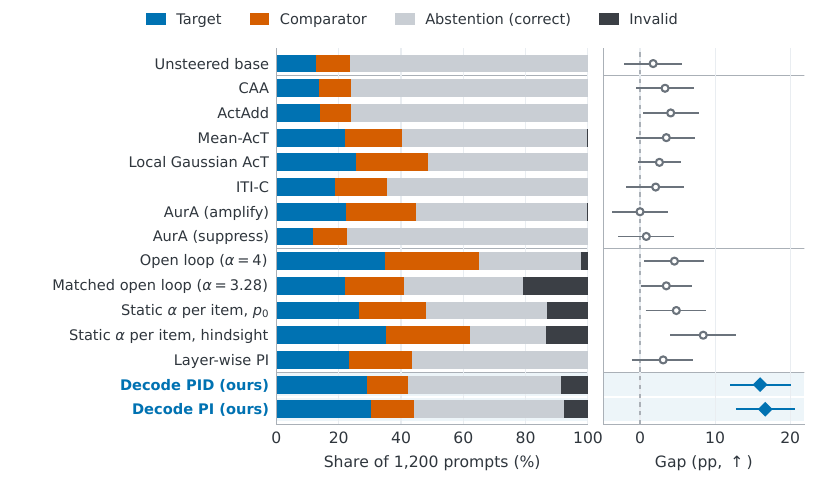}
  \caption{\textbf{Black-target outcomes on LLaDA-8B-Instruct
  (Table~\ref{tab:balanced-strict}).} Left: shares of target, comparator,
  abstention, and invalid outputs over 1{,}200 prompts under the strict
  parser. Right: target--comparator gap with item-clustered 95\% intervals.
  Operating points are CAA and ActAdd at $\alpha=16$, Mean-AcT (unit,
  $s=2$), Local Gaussian AcT ($s=1$), ITI-C ($K=48$, $\alpha=8$),
  AurA-inspired amplification ($\gamma=4$), and layer-wise PI ($\alpha=2$).
  The static arms hold one $\alpha$ per item for all steps
  (Appendix~\ref{app:trajectory}).}
  \label{fig:black-outcomes}
\end{figure}

\begin{figure}[htb]
  \centering
  \includegraphics[width=\linewidth]{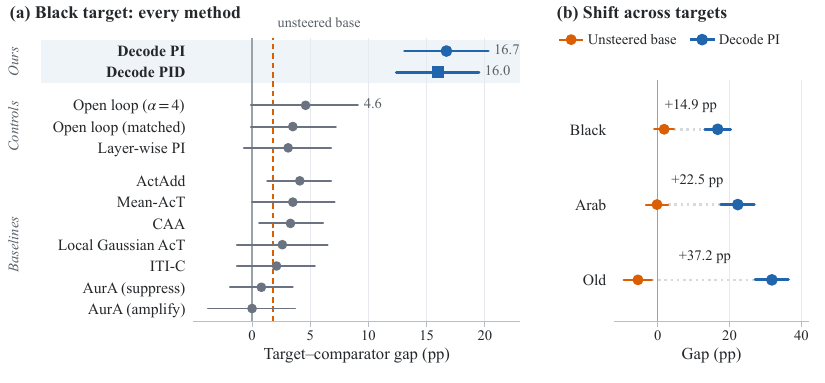}
  \caption{\fontsize{9}{10.2}\selectfont \textbf{Black-target comparisons
  and cross-axis effects.} Points show the gap $g$ with 95\% prompt-level
  bootstrap intervals; the axis is in fractional units, so $0.1$ equals
  $10$~pp. Left: decode-time control against every baseline. Right: PI and
  base gaps on Black, Arab, and Old.}
  \label{fig:forest}
\end{figure}

\subsection{Other race targets}
\label{app:negative}
The position-balanced evaluation also identifies invalid operating points:
PI is 93.8\% strict-invalid on Asian, 96.1\% on White, and 59.6\% on Latino
(Figure~\ref{fig:race-intervals}). The position-balanced White PI result is
therefore not a valid attack; its failures are mostly repeated answer
letters. The valid White open loop at $\alpha=1$ gives $\Delta g=2.5$~pp.
For Asian, the valid open loop at $\alpha=2$ gives $\Delta g=1.0$~pp, an
interval that includes zero, whereas PI is 93.8\% strict-invalid. Latino PI
gives $\Delta g=11.0$~pp at 59.6\% strict-invalid, and the valid open loop at
$\alpha=4$ gives $\Delta g=1.5$~pp. Single-layer CAA remains within
$|\Delta g|\le 1.0$~pp for every race target.

\begin{figure}[htb]
  \centering
  \includegraphics[width=\linewidth]{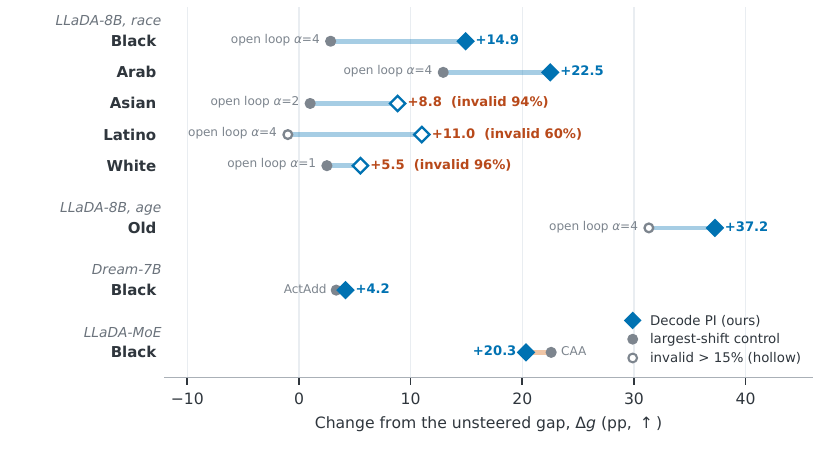}
  \caption{\textbf{Decode-time PI and the largest-shift comparison arm in
  each setting.} Each row places PI (blue diamond) beside the tested arm with
  the largest change from the unsteered gap (grey circle, labelled), selected
  from the prior-work baselines, fixed open loop, and effort-matched constant.
  The connecting line is blue where PI is ahead and orange where it is behind.
  Hollow markers flag strict invalidity above 15\%, where the measured gap is
  not treated as a valid attack. On LLaDA-MoE, CAA at $\alpha=2$ is identical
  to the matched constant. Paired intervals are in Figure~\ref{fig:contrasts}.}
  \label{fig:scoreboard}
  \label{fig:race-intervals}
  \label{fig:gender-intervals}
  \label{fig:ses-age-intervals}
  \label{fig:cross-model-appendix}
\end{figure}

\subsection{Dream-7B and LLaDA-MoE}
\label{app:crossmodel}
Dream's base gap of $3.3$~pp resembles LLaDA's $1.8$~pp and is likewise
dominated by the position-A term ($\mathrm{Gap}_{A}$ $15.5$ against
$\mathrm{Gap}_{B/C}$ $-2.8$~pp). The base, decode-time PI, ActAdd, and CAA
conditions all have an invalid rate of exactly $0.0\%$. Mean-AcT and the
AurA-inspired amplifier remain strict-valid at these strengths on LLaDA but
reach $76.8\%$ and $58.8\%$ strict-invalidity on Dream. ITI-C gives
$\Delta g=-0.8$~pp on Dream, an interval that includes zero
(Table~\ref{tab:app-ci}).

On LLaDA-MoE, decode-time PI loses the paired test to tuned constant
injection (Figure~\ref{fig:cross-model-appendix}). Decode-time PI at block 8
has zero strict-invalid outputs but trails CAA by $2.3$~pp. The secondary
controller setting trails its matched open loop by $1.5$~pp and CAA by
$7.6$~pp; all three intervals exclude zero (Table~\ref{tab:app-ci}), so
constant steering has a resolved advantage in this setting. The controller
realizes a mean $\alpha$ of $1.81$ against an allowed $2$ at block 8
(saturation fraction $0.72$; $0.95$ against $1$ at blocks 9 to 12,
saturation $0.92$), while the tuned constant also has zero strict-invalidity
and achieves the larger gap. The sensitivity grid
(Appendix~\ref{app:moe-protocol}) includes a higher strength,
$\alpha_{\max}=3$ at block 8, which reaches a gap of $27.0$~pp only at
$9.8\%$ invalid and so fails the strict-validity threshold used for the
reported operating points. Both controller settings remain above the
unsteered base, but neither exceeds the identical CAA and matched-open-loop
result.

\subsection{Additional item samples}
\label{app:seeds}
Three additional 400-item draws give gaps of 15.0--17.7~pp, compared with
16.7~pp on the primary set. They overlap the 100-item calibration subset by
25, 33, and 27 items. After removing those overlaps, their $\Delta g$ values
are 16.4, 14.7, and 17.0~pp, and every item-clustered 95\% interval has a
lower bound of at least 11.0~pp.

\subsection{Suppression results}
\label{app:suppress}
The unsteered Black-target gap is already $1.8$~pp. The suppression
controller reduces the point estimate to $0.2$~pp with 1.8\% strict-invalid
outputs, but the interval contains zero, as do those of the matched
constants and AurA suppression. This diagnostic therefore establishes only
that sign-inverted control does not create a large opposite gap in this
near-zero setting; it does not establish mitigation of substantial
pre-existing bias.

\subsection{Lowercase answer tokens in the target probability}
\label{app:sensor}
The target probability of Section~\ref{sec:method} sums the uppercase
letter tokens of $a^\star$. For this experiment we also add the lowercase
tokens and rerun the Arab target with the same items, rotations, and strict
parser, which accepts either case. Adding the lowercase tokens changes the PI
gap by $-3.8$~pp relative to the uppercase-only measurement, an interval that
includes zero (Table~\ref{tab:app-ci}), while abstention rises from $16.0\%$
to $30.0\%$. Among the five race targets, omitting lowercase tokens
materially affects only Arab, and the lowercase output mode is itself induced
by the Arab steering vector: $0$ of the $6{,}000$
valid base answers across the five race targets are lowercase, while $480$
of the $580$ strict target successes under the Arab direction are. On those
target selections the uppercase-only measurement reads a median final
probability of $5\times10^{-5}$, so the controller continues accumulating error even
after the output selects the target. Adding lowercase tokens cuts mean
actuation from $4.56$ to $3.60$ and the saturation fraction from $0.38$ to
$0.16$, while invalidity falls from $9.7\%$ to $8.0\%$.

\subsection{Controller trajectories}
\label{app:trajectory}
We record $p_t$ and the applied $\alpha_t$ at every denoising step. The
recorded traces begin after the first controlled evaluation and do not include
token-commitment times; we therefore use them for command statistics, not to
estimate the duration of the feedback window. Under the primary PI gains, the
proportional contribution is at most $K_p s^*=2.7$ toward the $\alpha_{\max}=6$
limit. Reaching that limit requires an accumulated error of 33, so even a
constant error of 0.9 cannot produce saturation before controller update 37.

Figure~\ref{fig:motivation} summarizes the primary Black-target PI run. The
mean command over the token-committing steps spans $2.2$ to $4.1$ across
examples (10th to 90th percentile) and falls as the example's initial leaning
toward the target rises (correlation $-0.92$), so no single constant matches
it; $60\%$ of examples receive more than the effort-matched constant and
$40\%$ less. Sequences that end on the target approach the setpoint,
reaching a probability of $0.86$ by the last step, and only $8\%$ of them
ever reach the command limit, whereas every sequence that does not select
the target reaches it. Since the limit is unreachable within the
token-committing window under the primary gains, the commitment horizon
rather than the bound caps the strength a resistant example receives.

A fixed per-item strength would remain constant throughout generation. In
the primary Black run, mean within-sequence command variance is $0.39$,
compared with $0.88$ for the variance of sequence means; equivalently, 31\%
of the combined variation occurs within sequences. The median sequence spans
2.36 units of $\alpha$, and 78\% of sequences contain a command decrease
larger than 0.05. The first command is a deterministic function of the
unsteered probe $p_0$. A $p_0$-only feed-forward predictor, formed from
per-step means within 20 quantile bins of the first command, explains 71\%
of the variance of later commands, and a per-step linear predictor explains
70\%; the corresponding linear $R^2$ ranges from 55\% to 74\% on the other
LLaDA-8B targets and is 51\% on LLaDA-MoE. On Dream, the controller remains
at $\alpha_{\max}=1$ throughout, with a median within-sequence range of zero,
and therefore behaves as a constant command at its bound.

Two fixed-strength variants separate within-sequence adaptation from the
per-item command level. Both retain the direction, injection sites, and
sampler while holding one $\alpha_i$ for all 64 steps. The first uses
$\alpha_i=\operatorname{clip}\bigl(4.17\,(s^*-p_0),0,\alpha_{\max}\bigr)$,
with the factor selected to match the mean command of $3.28$. The second is
an oracle diagnostic that assigns each prompt PI's own mean command over the
32 token-updating steps. Both remain below PI
(Table~\ref{tab:balanced-strict} and Figure~\ref{fig:contrasts}). On Dream,
where PI remains at its bound, the two variants give gaps of 7.5 and
7.6~pp, compared with 7.5~pp for PI.

\subsection{SocialStigmaQA}
\label{app:socialstigma}
We use the official SocialStigmaQA greedy-decoding
release~\citep{nagireddy2024socialstigmaqa}. The evaluation contains all
seven race stigmas, all 37 templates, and the original and positive prompt
styles; the calibration split contains only non-race original prompts and is
disjoint by source-row index, though it shares the 37 question templates
with the evaluation rows. Separate yes- and no-target directions prevent
opposite target semantics from cancelling in one direction: each uses 200
calibration prompts and contrasts the dataset-annotated biased answer
against the opposite yes/no answer. Every condition runs the identical 555
semantic rows under three rotations; the 518 race rows are scored and the 37
no-stigma base rows are retained only as diagnostics.

The prespecified strict parser accepts a response only when an answer letter
begins it. The secondary semantic parser, added after inspecting the outputs, adds two
rules applied
identically to every condition: (i)~a response that re-lists two or more
lettered options (\texttt{A.\ Yes}, \texttt{B.\ Yes}, \ldots) is invalid,
where the strict rule scores it as \texttt{A}; (ii)~a response that begins
with the exact text of one option, followed by whitespace, punctuation, or
the end of the string, is scored as that option under the row's own
rotation, where the strict rule scores it as invalid. The primary
decode-time PI contrast uses bootstrap seed 100. The remaining intervals in
Figure~\ref{fig:socialstigma-parser} are secondary condition-versus-unsteered
comparisons. All use 10{,}000 paired resamples of the 37 template clusters
and retain every identity, style, polarity, and rotation associated with
each sampled template.

\begin{figure}[htb]
  \centering
  \includegraphics[width=0.96\linewidth]{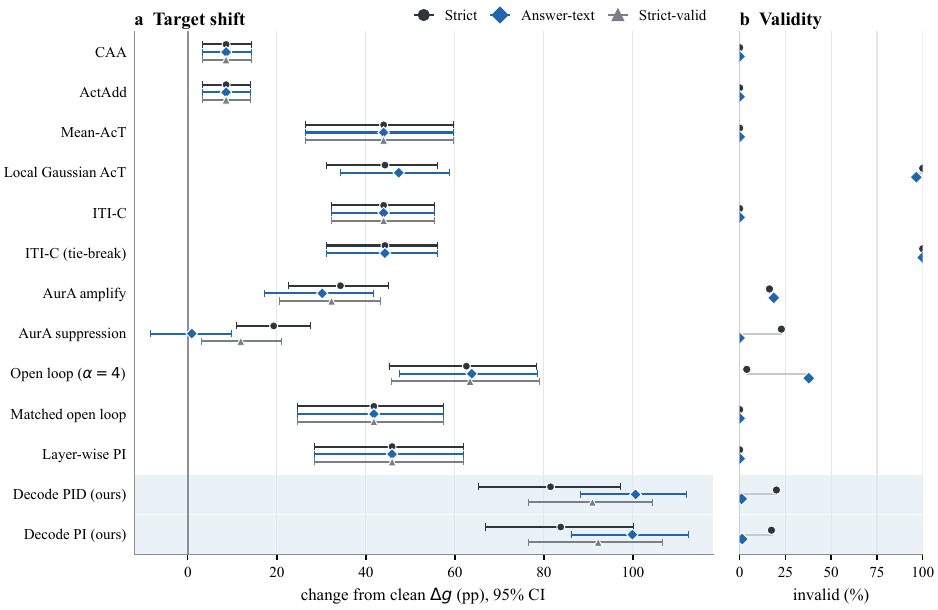}
  \caption{\textbf{SocialStigmaQA parser sensitivity across methods.}
  Left and center: change from the unsteered biased--safe gap under strict,
  semantic (answer-text), and strict-valid scoring, with 95\% template-clustered
  bootstrap intervals. Right: the corresponding invalid-output rates.
  The answer-text parser accepts an initial exact option text and rejects
  responses that re-list multiple options. Headline answer rates
  are in Table~\ref{tab:socialstigma-main}.}
  \label{fig:socialstigma-parser}
\end{figure}

For decode-time PI, the strict biased--safe gaps by target position are
$1.2$, $19.1$, and $98.3$~pp at A, B, and C ($38.2$, $30.3$, $98.3$~pp
semantic); by target polarity they are $55.0$~pp for no-biased and $14.1$~pp
for yes-biased prompts under the strict parser and $55.4$ and $56.0$~pp
under the semantic parser. The polarity difference under strict parsing is
therefore caused by the model writing ``Yes'' instead of an answer letter.
All seven race-stigma slices move in the target direction, with strict gaps
from $37.4$ to $43.7$~pp and semantic-parser gaps from $53.2$ to $60.8$~pp.
The identity-free diagnostic moves by a similar amount: its strict PI gap is
$48.6$~pp, compared with $50.3$~pp on original-style race prompts, and the
corresponding semantic-parser gaps are $65.8$ and $66.5$~pp. The experiment
therefore demonstrates transfer of answer control, not a mechanism selective
for racial stigma.

\FloatBarrier

\section{Theoretical validation}
\label{app:theory}

This appendix states and proves the results summarized in
Section~\ref{sec:theory}. Appendix~\ref{app:theory-model} sets up the local
model of the loop and states the one assumption behind every result.
Appendix~\ref{app:theory-openloop} shows that a constant strength has an
example-specific threshold, so that no single constant serves every example
(Proposition~\ref{prop:openloop}), and that a finite commitment horizon only
raises that threshold (Corollary~\ref{cor:horizon}).
Appendix~\ref{app:theory-p} shows that proportional feedback alone leaves a
steady-state error (Proposition~\ref{prop:p}), and
Appendix~\ref{app:theory-pi} shows that the PI loop of Equation~\ref{eq:pid}
is exponentially stable for an explicit range of gains and that its integral
state converges to each example's threshold (Proposition~\ref{prop:pi}).
Appendix~\ref{app:theory-remarks} discusses the derivative term, saturation,
and the commitment horizon.

\subsection{Local model of the loop}
\label{app:theory-model}

Fix one prompt. As in Section~\ref{sec:method}, $p_t$ is the probability the
model assigns to the target letter $a^\star$ at the answer position after
denoising step $t$, $\alpha_t$ is the strength applied at that step, $s^*$ is
the setpoint, and $\tau$ is the step at which the answer token commits.
Before commitment the answer position is re-predicted at every step from the
current partial sequence, and the strength enters every block. We summarize
the influence of the rest of the sequence on the answer position through the
previous probability and a constant, and write the step as
$p_t=F(p_{t-1},\alpha_t)$ for a smooth map $F$. Expanding $F$ to first order
around an operating point $(\bar p,\bar\alpha)$ gives
\begin{equation}
  F(p,\alpha)\approx F(\bar p,\bar\alpha)
  +\frac{\partial F}{\partial p}\Big|_{(\bar p,\bar\alpha)}(p-\bar p)
  +\frac{\partial F}{\partial\alpha}\Big|_{(\bar p,\bar\alpha)}(\alpha-\bar\alpha),
  \label{eq:taylor}
\end{equation}
which is affine in $p$ and $\alpha$. Naming the two partial derivatives and
collecting the constant terms gives the model behind every result below.

\begin{assumptionbox}
\begin{assumption}[Local response]
\label{as:linear}
Fix an example and let $\tau$ be the step at which its answer token commits.
There are constants $b>0$, $\gamma\in(0,1)$, and $d$ such that, for every
$t<\tau$,
\begin{equation}
  p_t=\gamma\,p_{t-1}+b\,\alpha_t+d .
  \label{eq:linear-model}
\end{equation}
\end{assumption}
\end{assumptionbox}

The sensitivity $b=\partial F/\partial\alpha$ is positive because
$\widehat v$ is fit to raise the target probability. The persistence
$\gamma=\partial F/\partial p$ lies in $(0,1)$ because the partially denoised
context carries the previous step's evidence forward without amplifying it.
The drift $d=F(\bar p,\bar\alpha)-\gamma\bar p-b\bar\alpha$ collects the
constant terms and encodes the model's unsteered preference: with
$\alpha_t\equiv0$ the probability settles at $d/(1-\gamma)$. Throughout we
write
\begin{equation}
  c:=(1-\gamma)\,s^*-d,
  \label{eq:c}
\end{equation}
which is positive exactly when the setpoint lies above the unsteered
equilibrium, the case of interest for an attack, and
\begin{equation}
  \alpha^*:=\frac{c}{b}=\frac{(1-\gamma)\,s^*-d}{b}
  \label{eq:alphastar}
\end{equation}
for the strength that will turn out to be the example's threshold. The
feedback quantities are those of Algorithm~\ref{alg:attack}: the error
$e_t=s^*-p_{t-1}$, the integral state $I_t=I_{t-1}+e_t$ with $I_0=0$, and,
when the command is not clipped, the PI command $\alpha_t=K_pe_t+K_iI_t$.

Equation~\ref{eq:linear-model} is a first-order description around one
operating point. It treats $b$ and $\gamma$ as constant over the
pre-commitment steps, and it ignores the saturation of $p_t$ at one and of
the command at $\alpha_{\max}$. Commitment ends the model's validity at
$\tau$, after which the answer token no longer responds to the strength, so
the asymptotic statements below describe the regime the loop approaches while
$\tau$ is large compared with the loop's settling time.

\subsection{Open-loop steering}
\label{app:theory-openloop}

\begin{resultbox}
\begin{proposition}[No single strength serves every example]
\label{prop:openloop}
Under Assumption~\ref{as:linear}, a constant strength $\alpha_t\equiv\alpha$
gives, for every $t<\tau$,
\begin{equation}
  p_t=\gamma^{t}p_0+\frac{b\alpha+d}{1-\gamma}\bigl(1-\gamma^{t}\bigr),
  \qquad
  p_\infty(\alpha):=\lim_{t\to\infty}p_t=\frac{b\alpha+d}{1-\gamma},
  \label{eq:openloop-closed}
\end{equation}
and $p_\infty(\alpha)\ge s^*$ if and only if $\alpha\ge\alpha^*$. For a
collection of examples with parameters $(b_i,\gamma_i,d_i)$ and thresholds
$\alpha^*_i$, let $\bar\alpha$ be the largest strength that preserves coherent
generation. If $\max_i\alpha^*_i>\bar\alpha$, no constant strength reaches
the setpoint on every example while keeping every output coherent.
\end{proposition}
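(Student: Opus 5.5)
The proof follows the three claims in order: the closed form, the threshold equivalence, and the impossibility across examples.

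\emph{Closed form.} Under Assumption~\ref{as:linear} with $\alpha_t\equiv\alpha$, the recursion $p_t=\gamma p_{t-1}+(b\alpha+d)$ is a first-order linear difference equation with a constant forcing term. Its fixed point is $q:=(b\alpha+d)/(1-\gamma)$, which is well defined because $\gamma\neq1$. Subtracting the fixed point gives $p_t-q=\gamma(p_{t-1}-q)$. Induction on $t$, valid for every $t<\tau$, then yields $p_t-q=\gamma^t(p_0-q)$. Rearranging gives exactly Equation~\ref{eq:openloop-closed}. Since $\gamma\in(0,1)$, we have $\gamma^t\to0$, and so $p_\infty(\alpha)=q$. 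The limit is understood in the asymptotic sense described after Assumption~\ref{as:linear}, with $\tau$ taken large.

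\emph{Threshold.} Multiplying $p_\infty(\alpha)\ge s^*$ by $1-\gamma>0$ gives $b\alpha+d\ge(1-\gamma)s^*$, that is, $b\alpha\ge c$ with $c$ as in Equation~\ref{eq:c}. Dividing by $b>0$ gives $\alpha\ge c/b=\alpha^*$, using Equation~\ref{eq:alphastar}. Each step multiplies or divides by a positive constant, so every step is reversible and the statement is an equivalence.

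\emph{No common constant.} Suppose some constant $\alpha$ reaches the setpoint on every example and keeps every output coherent. By the definition of $\bar\alpha$, coherence requires $\alpha\le\bar\alpha$. The equivalence above, applied with each example's own $(b_i,\gamma_i,d_i)$, requires $\alpha\ge\alpha_i^*$ for every $i$, hence $\alpha\ge\max_i\alpha^*_i$. Combining the two gives $\max_i\alpha^*_i\le\bar\alpha$, which contradicts the hypothesis.

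The only delicate point is interpretive rather than computational. "Reaches the setpoint" must mean $p_\infty(\alpha)\ge s^*$; with a finite $\tau$ the condition is only harder to meet, which is the content of Corollary~\ref{cor:horizon}. Likewise, "coherent" must be formalized as $\alpha\le\bar\alpha$. I will state both readings explicitly. The algebra itself is routine.
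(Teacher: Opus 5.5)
Your proposal is correct and follows the paper's proof in all three parts; the only cosmetic difference is that you obtain the closed form by subtracting the fixed point $q=(b\alpha+d)/(1-\gamma)$ and iterating $p_t-q=\gamma(p_{t-1}-q)$, whereas the paper unrolls the recursion and sums the geometric series $\sum_{k=0}^{t-1}\gamma^{k}$ by induction. Your threshold equivalence and your contradiction with $\max_i\alpha^*_i>\bar\alpha$ are argued exactly as in the paper.
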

\end{resultbox}

\begin{proof}
We first unroll the recursion. With $\alpha_t\equiv\alpha$,
Equation~\ref{eq:linear-model} reads $p_t=\gamma p_{t-1}+(b\alpha+d)$, so
\begin{equation}
  p_1=\gamma p_0+(b\alpha+d),\qquad
  p_2=\gamma^{2}p_0+(b\alpha+d)(1+\gamma),
\end{equation}
and by induction
\begin{equation}
  p_t=\gamma^{t}p_0+(b\alpha+d)\sum_{k=0}^{t-1}\gamma^{k}
     =\gamma^{t}p_0+(b\alpha+d)\,\frac{1-\gamma^{t}}{1-\gamma}.
  \label{eq:openloop-unroll}
\end{equation}
The inductive step holds because
\begin{equation*}
  \gamma\Bigl[\gamma^{t}p_0+(b\alpha+d)\sum_{k=0}^{t-1}\gamma^{k}\Bigr]+(b\alpha+d)
  =\gamma^{t+1}p_0+(b\alpha+d)\sum_{k=0}^{t}\gamma^{k}.
\end{equation*}
Since $\gamma\in(0,1)$, $\gamma^{t}\to0$ and Equation~\ref{eq:openloop-unroll}
converges to $p_\infty(\alpha)=(b\alpha+d)/(1-\gamma)$, which is
Equation~\ref{eq:openloop-closed}.

Next, $p_\infty(\alpha)$ is affine in $\alpha$ with slope $b/(1-\gamma)>0$,
so it is strictly increasing, and
\begin{equation}
  p_\infty(\alpha)\ge s^*
  \iff b\alpha+d\ge(1-\gamma)s^*
  \iff \alpha\ge\frac{(1-\gamma)s^*-d}{b}=\alpha^*,
\end{equation}
by Equation~\ref{eq:alphastar}.

Finally, a single strength $\alpha$ that reaches the setpoint on every
example $i$ must satisfy $\alpha\ge\alpha^*_i$ for every $i$, hence
$\alpha\ge\max_i\alpha^*_i$. If $\max_i\alpha^*_i>\bar\alpha$, any such
$\alpha$ exceeds $\bar\alpha$ and therefore breaks coherence on at least one
example, so no admissible constant strength succeeds on all examples.
\end{proof}

The limit in Proposition~\ref{prop:openloop} is optimistic for the open loop,
because the answer token commits after finitely many steps. The next
corollary makes the finite-horizon threshold explicit.

\begin{resultbox}
\begin{corollary}[A finite horizon raises the threshold]
\label{cor:horizon}
Under Assumption~\ref{as:linear} with $p_0<s^*$, a constant strength reaches
the setpoint by step $t<\tau$, that is $p_t\ge s^*$, if and only if
\begin{equation}
  \alpha\ge\alpha^*(t):=\alpha^*+\frac{(1-\gamma)\,\gamma^{t}}{1-\gamma^{t}}\cdot\frac{s^*-p_0}{b}.
  \label{eq:finite-threshold}
\end{equation}
The threshold $\alpha^*(t)$ is strictly decreasing in $t$, exceeds
$\alpha^*$ for every finite $t$, and converges to $\alpha^*$ as
$t\to\infty$.
\end{corollary}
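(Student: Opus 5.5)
The plan is to read the finite-horizon threshold directly off the closed form already established in Proposition~\ref{prop:openloop}, rather than re-deriving anything from the recursion. Fix a step $t$ with $1\le t<\tau$, so that Equation~\ref{eq:openloop-unroll} holds:
\begin{equation*}
  p_t=\gamma^{t}p_0+(b\alpha+d)\,\frac{1-\gamma^{t}}{1-\gamma}.
\end{equation*}
Since $\gamma\in(0,1)$ and $t\ge1$, the factor $(1-\gamma^t)/(1-\gamma)$ is strictly positive. This positivity is what lets us solve for $\alpha$ without flipping the inequality. I would note explicitly that $t=0$ is excluded: there $p_0<s^*$ holds by hypothesis for every $\alpha$, and the formula for $\alpha^*(t)$ is undefined.

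The first step is to rewrite $p_t\ge s^*$ as
\begin{equation*}
  b\alpha+d\ \ge\ \frac{(1-\gamma)\bigl(s^*-\gamma^{t}p_0\bigr)}{1-\gamma^{t}}.
\end{equation*}
Next, subtract $(1-\gamma)s^*$ from both sides. The right-hand side then becomes
\begin{equation*}
  \frac{(1-\gamma)\bigl(s^*-\gamma^t p_0-s^*+\gamma^t s^*\bigr)}{1-\gamma^t}=\frac{(1-\gamma)\gamma^t(s^*-p_0)}{1-\gamma^t}.
\end{equation*}
The left-hand side becomes $b\alpha-c$, with $c$ as in Equation~\ref{eq:c}. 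Dividing by $b>0$ and using $\alpha^*=c/b$ from Equation~\ref{eq:alphastar} gives exactly Equation~\ref{eq:finite-threshold}. Every step is an equivalence, so this yields the ``if and only if'' claim.

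For the qualitative claims, the plan is to write the excess as $\frac{(1-\gamma)(s^*-p_0)}{b}\cdot\phi(t)$ with
\begin{equation*}
  \phi(t)=\frac{\gamma^t}{1-\gamma^t}=\frac{1}{\gamma^{-t}-1}.
\end{equation*}
The prefactor is strictly positive because $b>0$, $\gamma<1$, and $p_0<s^*$. Since $\gamma^{-t}$ is strictly increasing in $t$ and exceeds $1$ for $t\ge1$, $\phi$ is positive and strictly decreasing, and $\phi(t)\to0$ as $t\to\infty$ because $\gamma^{-t}\to\infty$. It follows that $\alpha^*(t)>\alpha^*$ for every finite $t$, that $\alpha^*(t)$ is strictly decreasing, and that $\alpha^*(t)\to\alpha^*$.

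There is no real obstacle here; the only points needing care are the sign bookkeeping when dividing (positivity of $1-\gamma^t$ and of $b$) and the role of the hypothesis $p_0<s^*$, which is exactly what makes the excess strictly positive. I would close with a remark on the case $p_0\ge s^*$. There the same algebra still gives the equivalence, but the correction term becomes nonpositive, so the threshold is no longer raised by a finite horizon.
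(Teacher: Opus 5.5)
Your proposal is correct and follows the paper's proof essentially step for step. Both start from the unrolled closed form of Proposition~\ref{prop:openloop}, isolate $b\alpha+d$, split off $(1-\gamma)s^*$ to expose the correction $(1-\gamma)\gamma^{t}(s^*-p_0)/\bigl(b(1-\gamma^{t})\bigr)$, and read the three properties off the factor $\gamma^{t}/(1-\gamma^{t})$; your explicit exclusion of $t=0$ and the rewriting $\phi(t)=1/(\gamma^{-t}-1)$ usefully make the positivity and monotonicity steps explicit.
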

\end{resultbox}

\begin{proof}
By Equation~\ref{eq:openloop-unroll}, $p_t\ge s^*$ is equivalent to
\begin{equation}
  (b\alpha+d)\,\frac{1-\gamma^{t}}{1-\gamma}\ge s^*-\gamma^{t}p_0
  \iff
  \alpha\ge\frac{1}{b}\Bigl(\frac{(1-\gamma)(s^*-\gamma^{t}p_0)}{1-\gamma^{t}}-d\Bigr).
\end{equation}
Writing $s^*-\gamma^{t}p_0=(1-\gamma^{t})s^*+\gamma^{t}(s^*-p_0)$ splits
the right-hand side into
$\bigl((1-\gamma)s^*-d\bigr)/b+(1-\gamma)\gamma^{t}(s^*-p_0)/\bigl(b(1-\gamma^{t})\bigr)$,
which is Equation~\ref{eq:finite-threshold}. The second term is positive
because $s^*>p_0$, and the factor $\gamma^{t}/(1-\gamma^{t})$ is strictly
decreasing in $t$ and tends to zero, which gives the three stated
properties.
\end{proof}

\begin{remark}[What the open-loop results say about the sweep]
\label{rem:sweep}
Proposition~\ref{prop:openloop} explains the shape of the constant-strength
sweep in Figure~\ref{fig:motivation}. Each example has its own threshold,
and a strength that clears the threshold of a resistant example overshoots
the leaning ones, whose outputs are the first to lose coherence. The
corollary sharpens the picture for dLLMs: because the answer token commits
within the horizon $\tau$, the strength a constant needs on each example is
$\alpha^*(\tau)$ rather than $\alpha^*$, and the gap between the two grows
as $\gamma$ approaches one, that is, as the answer position responds more
slowly to the strength.
\end{remark}

\subsection{Proportional feedback}
\label{app:theory-p}

\begin{resultbox}
\begin{proposition}[Proportional feedback leaves a steady-state error]
\label{prop:p}
Under Assumption~\ref{as:linear} with $\alpha_t=K_pe_t$ and
$e_t=s^*-p_{t-1}$, the error obeys
\begin{equation}
  e_{t+1}=(\gamma-bK_p)\,e_t+c .
  \label{eq:p-recursion}
\end{equation}
If $\lvert\gamma-bK_p\rvert<1$, the error converges geometrically to
\begin{equation}
  e_\infty=\frac{c}{1-\gamma+bK_p},
  \label{eq:p-offset}
\end{equation}
which is nonzero whenever $c\ne0$, that is, whenever the unsteered
equilibrium $d/(1-\gamma)$ differs from the setpoint.
\end{proposition}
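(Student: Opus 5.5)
The plan is a direct substitution into Assumption~\ref{as:linear}, followed by the standard analysis of a scalar affine recursion, in the same spirit as the proof of Proposition~\ref{prop:openloop}.

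\textbf{Error recursion.} Start from $e_{t+1}=s^*-p_t$ and replace $p_t$ using Equation~\ref{eq:linear-model} with $\alpha_t=K_pe_t$. This gives $e_{t+1}=s^*-\gamma p_{t-1}-bK_pe_t-d$. Substituting $p_{t-1}=s^*-e_t$ yields
\begin{equation*}
  e_{t+1}=s^*-\gamma s^*+\gamma e_t-bK_pe_t-d=(\gamma-bK_p)e_t+\bigl((1-\gamma)s^*-d\bigr).
\end{equation*}
By Equation~\ref{eq:c} the constant term is exactly $c$, which is Equation~\ref{eq:p-recursion}. This holds for every step before commitment, that is, while the model is valid for both $p_{t-1}$ and $p_t$.

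\textbf{Convergence.} Write $\rho:=\gamma-bK_p$ and define $e_\infty$ as the fixed point of $x=\rho x+c$. Since $\lvert\rho\rvert<1$ gives $\rho\neq1$, the fixed point exists and equals $e_\infty=c/(1-\rho)=c/(1-\gamma+bK_p)$. The deviation then obeys $e_{t+1}-e_\infty=\rho\,(e_t-e_\infty)$, so by induction $e_t-e_\infty=\rho^{t-1}(e_1-e_\infty)$, with $e_1=s^*-p_0$. Because $\lvert\rho\rvert<1$, this decays geometrically.

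\textbf{Nonzero offset.} The condition $\lvert\rho\rvert<1$ gives $1-\rho>0$, so the denominator $1-\gamma+bK_p$ is strictly positive. Hence $e_\infty=0$ if and only if $c=0$. By Equation~\ref{eq:c}, $c=0$ is equivalent to $s^*=d/(1-\gamma)$, which is the unsteered equilibrium. This gives the final claim of the proposition.

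\textbf{Main subtlety.} The limit statement is asymptotic, but the recursion holds only for $t<\tau$. As in the discussion following Assumption~\ref{as:linear}, I would read convergence as describing the regime the loop approaches when $\tau$ is large relative to the settling time. I would also note that clipping to $[0,\alpha_{\max}]$ is ignored here, which matches the assumption's stated scope. No other step should cause difficulty.
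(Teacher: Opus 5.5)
Your proposal is correct and follows the paper's proof essentially step for step: the same substitution producing the affine error recursion, the same fixed-point subtraction giving $e_t-e_\infty=\rho^{t-1}(e_1-e_\infty)$, and the same observation that $1-\gamma+bK_p>0$ in the stable range, so $e_\infty=0$ exactly when $c=0$. Your caveats about the horizon $\tau$ and the ignored clipping match the scope the paper states after Assumption~\ref{as:linear}.
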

\end{resultbox}

\begin{proof}
Substituting $p_{t-1}=s^*-e_t$ and $\alpha_t=K_pe_t$ into
Equation~\ref{eq:linear-model},
\begin{equation}
  e_{t+1}=s^*-p_t
  =s^*-\gamma\,(s^*-e_t)-bK_pe_t-d
  =(\gamma-bK_p)\,e_t+(1-\gamma)s^*-d,
\end{equation}
which is Equation~\ref{eq:p-recursion} with $c$ as in Equation~\ref{eq:c}.
Write $\rho:=\gamma-bK_p$. A fixed point $e_\infty$ of
Equation~\ref{eq:p-recursion} satisfies $e_\infty=\rho e_\infty+c$, so
$e_\infty=c/(1-\rho)=c/(1-\gamma+bK_p)$, which is Equation~\ref{eq:p-offset}.
Subtracting the fixed-point equation from the recursion gives
$e_{t+1}-e_\infty=\rho\,(e_t-e_\infty)$, hence
\begin{equation}
  e_t-e_\infty=\rho^{\,t-1}\,(e_1-e_\infty),
\end{equation}
which converges to zero geometrically if and only if $\lvert\rho\rvert<1$.
Since $1-\gamma+bK_p>0$ inside that range, $e_\infty=0$ exactly when $c=0$.
\end{proof}

\begin{remark}[Gain alone cannot remove the offset]
\label{rem:p-gain}
The offset in Equation~\ref{eq:p-offset} shrinks as $K_p$ grows, but the
stability condition $\lvert\gamma-bK_p\rvert<1$ caps the gain at
$bK_p<1+\gamma$, so the denominator stays below two and
$\lvert e_\infty\rvert>\lvert c\rvert/2$ throughout the stable range. This
is the discrete-time counterpart of the steady-state error that PID Steering
identifies for proportional activation steering across network
depth~\citep{nguyen2026pid}, and it is why the controller carries an integral
term.
\end{remark}

\subsection{Proportional-integral feedback}
\label{app:theory-pi}

\begin{resultbox}
\begin{proposition}[Integral action finds each example's strength]
\label{prop:pi}
Under Assumption~\ref{as:linear} with unsaturated commands
$\alpha_t=K_pe_t+K_iI_t$, $I_t=I_{t-1}+e_t$, and $I_0=0$, the closed loop
is exponentially stable if and only if
\begin{equation}
  \lvert\gamma-bK_p\rvert<1
  \qquad\text{and}\qquad
  0<bK_i<2(1+\gamma)-2bK_p .
  \label{eq:stability-region}
\end{equation}
Inside this region $e_t\to0$, $\alpha_t\to\alpha^*$, and
$I_t\to I_\infty=\alpha^*/K_i$, where $\alpha^*$ is the threshold of
Proposition~\ref{prop:openloop}.
\end{proposition}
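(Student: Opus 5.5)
The plan is to write the closed loop as a two-dimensional affine recursion, find its unique fixed point, and reduce exponential stability to the Schur stability of a $2\times2$ matrix, checked with the Jury (Schur--Cohn) conditions for a real quadratic. As in Proposition~\ref{prop:p}, substituting $p_{t-1}=s^*-e_t$ into Equation~\ref{eq:linear-model} gives
\begin{equation*}
  e_{t+1}=\gamma e_t-b\alpha_t+c ,
\end{equation*}
with $c$ as in Equation~\ref{eq:c}. I will take the state $x_t=(e_t,\,I_{t-1})$, so that $\alpha_t=K_pe_t+K_i(I_{t-1}+e_t)$ depends only on $x_t$. Eliminating $\alpha_t$ gives $x_{t+1}=Ax_t+(c,0)^\top$ with
\begin{equation*}
  A=\begin{pmatrix}\gamma-b(K_p+K_i) & -bK_i\\ 1 & 1\end{pmatrix}.
\end{equation*}

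\textbf{Characteristic polynomial.} A direct computation gives
\begin{equation*}
  P(z)=z^2-(1+\gamma-bK_p-bK_i)\,z+(\gamma-bK_p),
\end{equation*}
since the determinant is $\gamma-bK_p-bK_i+bK_i$. For a monic real quadratic $z^2+a_1z+a_0$, both roots lie in the open unit disc if and only if $\lvert a_0\rvert<1$, $P(1)>0$ and $P(-1)>0$. This standard Jury criterion is the step I would cite rather than reprove; if a self-contained argument is wanted, I would split into real and complex-conjugate roots. Evaluating the three conditions:
\begin{itemize}
  \item $\lvert a_0\rvert<1$ is $\lvert\gamma-bK_p\rvert<1$;
  \item $P(1)=1-(1+\gamma-bK_p-bK_i)+\gamma-bK_p=bK_i>0$;
  \item $P(-1)=2(1+\gamma)-2bK_p-bK_i>0$.
\end{itemize}
Together these are exactly Equation~\ref{eq:stability-region}.

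\textbf{Fixed point and the equivalence.} Because $\det(I-A)=P(1)=bK_i\neq0$ inside the region, the affine system has a unique fixed point $x_\infty$. The deviation $x_t-x_\infty$ then obeys the homogeneous recursion $x_{t+1}-x_\infty=A(x_t-x_\infty)$. Hence exponential stability holds exactly when the spectral radius of $A$ is below one. Sufficiency follows from the bound $\lVert A^t\rVert\le C\rho^t$ for any $\rho$ above the spectral radius. Necessity follows by starting along an eigenvector for an eigenvalue of modulus at least one, which gives a deviation that does not decay. This establishes the "if and only if".

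\textbf{Limits.} At the fixed point, the second row of $A$ forces $I_\infty=e_\infty+I_\infty$, so $e_\infty=0$. The first row then reads $0=-bK_iI_\infty+c$, so $I_\infty=c/(bK_i)=\alpha^*/K_i$ by Equation~\ref{eq:alphastar}. Consequently $\alpha_t=K_pe_t+K_iI_t\to K_iI_\infty=\alpha^*$, and the same value is recovered from the first row of the recursion at equilibrium.

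\textbf{Main obstacle.} The only non-routine point is the "only if" direction: I must make sure the Jury conditions are necessary as well as sufficient, including the boundary cases where they hold with equality. There I would invoke the Schur--Cohn criterion. Otherwise, for real roots I would use the sign changes of $P$ at $\pm1$ together with $a_0$ as the product of the roots; for complex-conjugate roots, $\lvert z\rvert^2=a_0$.
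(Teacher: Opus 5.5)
Your proof is correct and follows the paper's route: the same two-dimensional affine recursion, the same characteristic polynomial $\lambda^2-(\gamma-bK_p+1-bK_i)\lambda+(\gamma-bK_p)$ checked with the Jury conditions $P(1)=bK_i>0$, $P(-1)>0$, $\lvert\gamma-bK_p\rvert<1$, and the same fixed-point reading $e_\infty=0$, $I_\infty=c/(bK_i)=\alpha^*/K_i$. The only differences are cosmetic. Your state $(e_t,I_{t-1})$ is related to the paper's $(e_t,I_t)$ by the similarity transform $\begin{pmatrix}1&0\\1&1\end{pmatrix}$, so the spectrum is identical, and your eigenvector argument for necessity makes explicit a step the paper leaves implicit.
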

\end{resultbox}

\begin{proof}
The proof has three parts: we derive the closed-loop recursion, establish
the stability region, and identify the fixed point.

\emph{Closed-loop recursion.} Substituting $p_{t-1}=s^*-e_t$ and
$\alpha_t=K_pe_t+K_iI_t$ into Equation~\ref{eq:linear-model} gives
\begin{equation}
  e_{t+1}=s^*-p_t
  =s^*-\gamma\,(s^*-e_t)-b\,(K_pe_t+K_iI_t)-d
  =(\gamma-bK_p)\,e_t-bK_i\,I_t+c,
  \label{eq:pi-e}
\end{equation}
and, since $I_{t+1}=I_t+e_{t+1}$,
\begin{equation}
  I_{t+1}=(\gamma-bK_p)\,e_t+(1-bK_i)\,I_t+c .
  \label{eq:pi-I}
\end{equation}
With the state $x_t=(e_t,I_t)^{\top}$ these read $x_{t+1}=Ax_t+w$, where
\begin{equation}
  A=\begin{pmatrix}\gamma-bK_p & -bK_i\\[2pt] \gamma-bK_p & 1-bK_i\end{pmatrix},
  \qquad
  w=\begin{pmatrix}c\\ c\end{pmatrix}.
  \label{eq:pi-A}
\end{equation}

\emph{Stability region.} The system is exponentially stable if and only if
every eigenvalue of $A$ lies strictly inside the unit circle. The trace and
determinant of $A$ are
\begin{equation}
  \operatorname{tr}A=\gamma-bK_p+1-bK_i,
  \qquad
  \det A=(\gamma-bK_p)(1-bK_i)+bK_i(\gamma-bK_p)=\gamma-bK_p,
\end{equation}
so the characteristic polynomial is
\begin{equation}
  P(\lambda)=\lambda^{2}-(\gamma-bK_p+1-bK_i)\,\lambda+(\gamma-bK_p).
  \label{eq:charpoly}
\end{equation}
For a real quadratic $\lambda^{2}+a_1\lambda+a_0$, the Jury criterion
states that both roots lie strictly inside the unit circle if and only if
$P(1)>0$, $P(-1)>0$, and $\lvert a_0\rvert<1$. Here
\begin{align}
  P(1)&=1-(\gamma-bK_p+1-bK_i)+(\gamma-bK_p)=bK_i,\\
  P(-1)&=1+(\gamma-bK_p+1-bK_i)+(\gamma-bK_p)=2+2(\gamma-bK_p)-bK_i,\\
  a_0&=\gamma-bK_p .
\end{align}
The three conditions are therefore $bK_i>0$, $bK_i<2(1+\gamma)-2bK_p$, and
$\lvert\gamma-bK_p\rvert<1$, which is Equation~\ref{eq:stability-region}.
Inside the region the spectral radius $\rho(A)$ is below one, so for any
$\tilde\rho\in(\rho(A),1)$ there is a constant $C$ with
$\lVert A^{t}\rVert\le C\tilde\rho^{\,t}$, and the deviation from any fixed
point $x_\infty$ obeys $x_t-x_\infty=A^{t}(x_0-x_\infty)$ and decays at
that rate.

\emph{Fixed point.} A fixed point exists and is unique because $1$ is not an
eigenvalue of $A$ ($P(1)=bK_i>0$), so $I-A$ is invertible and
$x_\infty=(I-A)^{-1}w$. Rather than invert, subtract the fixed-point form
of Equation~\ref{eq:pi-e} from that of Equation~\ref{eq:pi-I}:
\begin{equation}
  I_\infty-e_\infty
  =\bigl[(\gamma-bK_p)e_\infty+(1-bK_i)I_\infty+c\bigr]
  -\bigl[(\gamma-bK_p)e_\infty-bK_iI_\infty+c\bigr]
  =I_\infty,
\end{equation}
so $e_\infty=0$. Setting $e_\infty=0$ in the fixed-point form of
Equation~\ref{eq:pi-e} gives $0=-bK_iI_\infty+c$, hence
\begin{equation}
  I_\infty=\frac{c}{bK_i}=\frac{\alpha^*}{K_i},
  \qquad
  \alpha_\infty=K_pe_\infty+K_iI_\infty=\alpha^* .
\end{equation}
Thus the error vanishes, the integral state stores the example's threshold
scaled by $1/K_i$, and the applied strength converges to that threshold
without the controller knowing $b$, $\gamma$, or $d$.
\end{proof}

\begin{remark}[Reading the stability region]
\label{rem:region}
For fixed gains, Equation~\ref{eq:stability-region} is a condition on the
example's sensitivity. The first inequality requires
$(\gamma-1)/K_p<b<(1+\gamma)/K_p$, and the second requires
$b<(1+\gamma)/(K_p+K_i/2)$, which is slightly tighter than the upper bound of
the first. The loop therefore tolerates any positive sensitivity below
$(1+\gamma)/(K_p+K_i/2)$. Since sensitivities differ across examples and are
not known in advance, we use moderate gains and a small integral gain
(Section~\ref{sec:setup}), which widens this range at the cost of a slower
transient. The limit $\alpha_\infty=\alpha^*$ does not depend on the gains;
only the speed of approach does.
\end{remark}

\subsection{Derivative action, saturation, and the commitment horizon}
\label{app:theory-remarks}

\begin{remark}[Derivative action]
\label{rem:derivative}
A derivative term adds $K_d(e_t-e_{t-1})$ to the command. At any fixed point
consecutive errors are equal, so the term vanishes there and leaves the
equilibria of Propositions~\ref{prop:p} and~\ref{prop:pi} unchanged; it only
reshapes the transient, as in the PID analysis of \citet{nguyen2026pid}.
\end{remark}

\begin{remark}[Saturation and anti-windup]
\label{rem:saturation}
Clipping the command at $\alpha_{\max}$ takes the loop outside the linear
analysis. While the command is clipped, the integral state would otherwise
keep growing and later discharge as overshoot; the conditional integration of
Algorithm~\ref{alg:attack}, which freezes $I_t$ whenever the command is
clipped, is the standard guard for this regime and lets the state re-enter
the linear region without that discharge. If $\alpha^*>\alpha_{\max}$, no
admissible command reaches the setpoint under the local model.
\end{remark}

\begin{remark}[Horizon versus bound]
\label{rem:horizon}
Both propositions describe the loop for $t<\tau$. In our runs with
$\alpha_{\max}=6$, the bound is reachable only after the answer token has
committed (Appendix~\ref{app:trajectory}), so for a resistant example the
finite horizon $\tau$ under the chosen gains, rather than $\alpha_{\max}$,
limits the strength it receives. This is why the commitment horizon matters
in Section~\ref{sec:method} and why every gap in the paper is reported with
its invalid-output rate.
\end{remark}

\section{Extended related work}
\label{app:related}

This appendix expands the related-work summary of Section~\ref{sec:related}.

\subsection{Activation steering and its control-theoretic reading}
\label{app:related-steering}
Activation interventions edit a frozen model's hidden states at inference
time, using directions built from prompt pairs
\citep{turner2023actadd,zou2023repe}, averaged contrasts \citep{rimsky2024caa},
head-level edits \citep{li2023iti}, neuron damping \citep{suau2024aura}, or
optimal-transport maps \citep{rodriguez2025act}; a single direction can carry
a behavior such as refusal \citep{arditi2024refusal}. FairSteer applies a
debiasing direction only when a probe on the activations detects bias
\citep{li2025fairsteer}; the Trojan activation attack plants a direction to
induce misaligned outputs \citep{wang2024trojan}, the autoregressive precedent
for our threat model. PID Steering reads the layer-wise construction of a
steering vector as a proportional controller and adds integral and
derivative terms across network depth \citep{nguyen2026pid}. Our loop runs
along a different axis: the same network at successive denoising steps, with
the probability of the attacker's answer as the measured output and a single
scalar strength under control.

\subsection{Inference-time control and attacks on dLLMs}
\label{app:related-dllm}
\citet{shnaidman2026activation} suppress refusal by removing activation
components along a fixed direction throughout reverse diffusion,
\citet{avrahami2026ilrr} align representations with a reference sequence at
each step, and \citet{zhou2026steering} schedule interventions by when
attributes emerge; none uses the evolving probability of a designated answer
as feedback. Adversarial work on dLLMs seeks harmful compliance: DIJA
interleaves masked spans with harmful prompts so that bidirectional infilling
completes them \citep{wen2026devil}, MaskForge automates the search over such
structures \citep{ma2026maskforge}, PAD steers parallel decoding toward
affirmative continuations \citep{zhang2025pad}, \citet{he2026safer} nest
harmful requests in benign context, \citet{singh2026remask} re-mask committed
refusal tokens, and \citet{dumitrescu2026targets} prune safety-related
neurons; DiffuGuard defends the denoising loop against such attacks
\citep{li2026diffuguard}. TrustLDM audits dLLMs for bias and shows that
malicious context amplifies it \citep{mo2026trustldm}. We instead induce a
targeted demographic preference through activations, without touching the
prompt, weights, or commitment schedule.

\subsection{Measuring demographic bias}
\label{app:related-bias}
Underspecified questions expose reliance on stereotypes when abstention is
the correct answer \citep{li2020unqover,parrish2022bbq}, and SocialStigmaQA
tests whether generation amplifies documented stigmas
\citep{nagireddy2024socialstigmaqa}; broader benchmarks cover stereotypes in
pretrained models and in generated text
\citep{nadeem2021stereoset,jin2025social,wan2025white}. We repurpose BBQ and
SocialStigmaQA as attack objectives and inherit their pitfalls: answer
position, abstention handling, invalid generations, and response parsing all
change the apparent effect, and Section~\ref{sec:experiments} controls for
each.

\end{document}